\DeclareMathAlphabet{\mathbfit}{OML}{cmm}{b}{it}
\newcommand{\TODO}{\@ifnextchar[{\@TODO}{\@@TODO}}
\long\def\@TODO[#1]#2{\bgroup\color{red}\ifmmode\else\itshape\fi TODO[#1]: #2\egroup}
\long\def\@@TODO#1{\bgroup\color{red}\ifmmode\else\itshape\fi TODO: #1\egroup}
\spnewtheorem{examplebf}[theorem]{Example}{\bfseries}{\rmfamily}
\newcommand{\overbar}[1]{\mkern 1.5mu\overline{\mkern-1.5mu#1\mkern-1.5mu}\mkern 1.5mu}
\newcommand{\R}{\ensuremath{\mathbb{R}}}
\newcommand{\F}{\ensuremath{\mathbb{F}}}
\newcommand{\N}{\ensuremath{\mathbb{N}}}
\DeclareMathOperator*{\argmax}{argmax} 
\DeclareMathOperator{\votes}{votes}
\def\ok#1{\mbox{\raisebox{0ex}[1ex][1ex]{$#1$}}}
\DeclareMathOperator{\prim}{pr}
\newcommand{\Rn}{\ensuremath{\R^{n}}}
\newcommand{\plust}{\text{+1}}
\newcommand{\minust}{\text{--1}}
\newcommand{\qmt}{\mathbfit{\top}}
\newcommand{\ovo}{\text{ovo}}
\newcommand{\ovr}{\text{ovr}}
\DeclareMathOperator{\Rob}{Rob}
\DeclareMathOperator{\rad}{rad}
\newcommand{\udr}{\stackrel{{\mbox{\tiny\ensuremath{\triangle}}}}{\Leftrightarrow}}
\newcommand{\ra}{\rightarrow}
\newcommand{\Ra}{\Rightarrow}
\newcommand{\Lra}{\Leftrightarrow}
\newcommand{\ud}{\triangleq}
\def \tuple#1{\langle #1 \rangle}
\newcommand{\bN}{\mathbb{N}}
\newcommand{\norm}[1]{\left\lVert #1 \right\rVert}
\newcommand{\Int}{\ensuremath{\mathrm{Int}}}
\newcommand{\commentt}[1]{}
\DeclareMathOperator{\sign}{sign}
\DeclareMathOperator{\AF}{AF}
\DeclareMathOperator{\RAF}{RAF}
\DeclareMathOperator{\frm}{frm}
\newcommand{\gammafk}{\gamma_{\AF_k}}
\renewcommand{\vec}[1]{\ensuremath{\mathbf{#1}}}
\title{
  Robustness Verification of Support Vector Machines
}
\author{Francesco Ranzato \and Marco Zanella}
\institute{Dipartimento di Matematica, University of Padova, Italy}
\date{}
\begin{document}
\maketitle

\begin{abstract}
We study the problem of formally verifying 
the robustness to adversarial examples  
of support vector machines (SVMs), a major machine learning model for classification and regression tasks. 
Following a recent stream of works on formal robustness verification of (deep) neural networks, 
our approach relies on a sound abstract version  
of a given SVM classifier to be used for checking its robustness. This methodology is parametric on a given 
numerical abstraction of real values and, analogously to the case of neural networks, needs neither 
abstract least upper bounds nor
widening operators on this abstraction. 
The standard interval domain provides a simple instantiation of our abstraction technique, 
which is enhanced with the domain of reduced affine forms, which is an efficient abstraction of the zonotope abstract domain. 
This robustness verification technique has been fully implemented and experimentally evaluated on SVMs based on linear
and nonlinear (polynomial and radial basis function) kernels,  
which have been trained on the popular 
MNIST dataset of images and on the recent and more challenging Fashion-MNIST dataset. 
The experimental results of our prototype SVM robustness verifier appear to be 
encouraging: this automated verification is 
fast, scalable and shows significantly high percentages of provable robustness on the test set
of MNIST, in particular compared to the analogous provable robustness of neural networks.      
\end{abstract}

\section{Introduction}
\label{sec:introduction}

Adversarial machine learning \cite{cacm18,aml-scale,aml-book} is an emerging hot topic studying vulnerabilities of machine learning (ML)
techniques
in adversarial scenarios and whose main objective is to design methodologies for making learning tools robust to adversarial 
attacks. Adversarial examples have been found in diverse application fields of ML such as
image classification, speech recognition and malware detection \cite{cacm18}. 
Current defense techniques include adversarial model training,  input validation, testing and automatic verification 
of learning algorithms (see the recent survey \cite{cacm18}). In particular, formal verification of ML classifiers started to be
an active field of investigation \cite{dillig2019,ehlers17,vechev-sp18,deepsafe,huang17,katz17,mirman2018differentiable,pt10,pt12,vechev-nips18,singh2019,wang18,weng18}
within the verification and static analysis community. 
Robustness to adversarial inputs is an important safety property of
ML classifiers whose formal verification has been investigated for (deep) neural networks \cite{dillig2019,vechev-sp18,pt10,vechev-nips18,singh2019,weng18}. 
A classifier is robust to some (typically small) perturbation of its input objects representing an adversarial attack 
when it assigns the same class to all the objects within that perturbation. Thus, 
slight malicious alterations of input objects should not deceive a robust classifier. Pulina and Tacchella \cite{pt10} first put forward
the idea of a formal robustness verification of neural network classifiers by leveraging 
interval-based abstract interpretation for designing a sound abstract classifier. This abstraction-based 
verification approach has been pushed forward by Vechev et al.\ \cite{vechev-sp18,vechev-nips18,singh2019},
who designed a scalable robustness verification technique which relies on abstract interpretation of deep neural networks based
on a specifically tailored abstract domain \cite{singh2019}. 

While all the aforementioned verification techniques consider (deep) neural networks as ML model, in this work we focus
on support vector machines (SVMs), which is a major learning model extensively and successfully used  
for both classification and regression tasks \cite{cs00}. SVMs are widely applied in different fields where adversarial attacks 
must be taken into account, notably image classification, malware detection, intrusion detection and spam filtering \cite{biggio2014}. 
Adversarial attacks and robustness issues of SVMs have been defined and studied
by some authors \cite{biggio2014,biggio11,Nam2010,trafalis2007robust,biggio15,xu2009robustness,Zhou2012},
in particular investigating robust training and experimental robustness evaluation of SVMs. To the best of our knowledge, no formal
and automatic robustness certification technique for SVMs has been studied.

\paragraph{\textbf{Contributions.}}
A simple and standard model of
adversarial region for a ML classifier $C:X \ra L$, where $X\subseteq \R^n$ is the input space and $L$ is the set of classes, is based on 
a set of perturbations $P(\vec{x})\subseteq X$ of an input $\vec{x}\in X$ for $C$, which typically exploits some metric on $\R^n$ to quantify a similarity to $\vec{x}$. A classifier $C$ is robust on an input $\vec{x}$ 
for a perturbation $P$ when for all $\vec{x}'\in P(\vec{x})$, $C(\vec{x}')=C(\vec{x})$ holds, meaning that the adversary cannot attack the classification of $\vec{x}$ made by $C$ 
by selecting input objects from $P(\vec{x})$ \cite{carlini}. 
We consider the most effective SVM classifiers based on common 
linear and nonlinear kernels, in particular polynomial and Gaussian radial basis function (RBFs) \cite{cs00}. Our technique for formally verifying 
the robustness of $C$ is quite standard: 
by leveraging a numerical abstraction $A$ of sets of real vectors in $\wp(\R^n)$, we define a sound
abstract classifier $C^\sharp:A\ra L\cup \{\top\}$ and a sound abstract perturbation $P^\sharp:X \ra A$, in such a way that if $C^\sharp(P^\sharp(\vec{x}))=C(\vec{x})$ holds then $C$ is proved to be robust on $\vec{x}$ for the adversarial region $P$. As usual in static analysis, scalability and precision are the main issues in SVM verification. A robustness verifier has to scale with the number of support vectors of the SVM classifier $C$, which in turn 
depends on the size of the training dataset for $C$, which may be huge (easily 
tens/hundreds of thousands of samples). Moreover, 
the precision of a verifier may crucially depend 
on the relational information between the components, called features in ML, 
of input vectors in $\R^n$, whose number may be quite large (easily hundreds/thousands of features). For our robustness verifier, we 
used an abstraction which is a product of the standard nonrelational interval 
domain \cite{CC77} and of the so-called reduced affine form (RAF) abstraction, a relational domain representing the dependencies from the
components of input vectors. A RAF for vectors in $\R^n$ is given 
by $a_0 +\sum_{i=1}^n a_i\epsilon_i + a_r \epsilon_r$, where  $\epsilon_i$'s 
are symbolic variables ranging in [-1,1] and representing a dependence from
the $i$-th component of the vector, while $\epsilon_r$ is a further 
symbolic variable in [-1,1]
which accumulates all the approximations introduced by nonlinear operations such
as multiplication and exponential. 
RAFs can be viewed as 
a restriction to a given length (here the dimension $n$ of $\R^n$) of the zonotope domain used in static program analysis \cite{goubault2015}, which features
an optimal abstract multiplication \cite{skalna2017}, the crucial operation of
abstract nonlinear SVMs.  
We implemented our robustness verification method for SVMs in
a tool called \emph{SAVer} (\emph{S}vm \emph{A}bstract \emph{Ver}ifier), written in C. 
Our experimental evaluation of SAVer employed the popular 
MNIST \cite{mnist} image dataset and
the recent and more challenging alternative Fashion-MNIST dataset \cite{fmnist}. Both datasets contain grayscale images of $28 \!\times\! 28$ pixels,
represented by normalized vectors of floating-point numbers in $[0,1]^{784}$. 
Our benchmarks show the percentage of samples of the full 
test sets for which a SVM 
is proved to be robust (and, dually, vulnerable) for a given perturbation, 
the average verification times per sample, and the scalability 
of the robustness verifier w.r.t.\ the number of support vectors. 
We also compared SAVer to DeepPoly \cite{singh2019}, a 
robustness verification tool for  deep neural networks based on abstract interpretation.
Our experimental results indicate that SAVer is fast and scalable and that 
the percentage of robustness provable by SAVer for SVMs
is significantly higher than the robustness  provable by DeepPoly for deep neural networks.

\paragraph*{\textbf{Illustrative Example.}}
Figure~\ref{fig:example-intro} shows a toy binary SVM classifier for input vectors in $\mathbb{R}^2$,  
with four support vectors $\vec{sv_1}=(8,7)$, $\vec{sv_2}=(10,-4)$, $\vec{sv_3}=(8,1)$, $\vec{sv_4}=(9,-5)$ for 
a polynomial kernel of degree 2. The corresponding classifier $C:\R^2 \ra \{\minust,\plust\}$ is:
\begin{align*}
C(\vec{x}) & = \sign(\textstyle\sum_{i=1}^4 \alpha_i y_i (\vec{sv_i}\cdot \vec{x})^2 + b) \\
&=\sign(\alpha_1( 8x_1 \!+\! 7x_2)^2 \!\!-\! \alpha_2 (10x_1 \!-\! 4x_2)^2 \!\!-\! \alpha_3(8x_1 \!+\!  x_2)^2 
\!\!+\! \alpha_4 (9x_1 \!-\! 5x_2)^2 + b)
\end{align*}
where $y_i$ and $\alpha_i$ are, respectively, the classes ($\pm 1$) and weights of the support vectors $\vec{sv_i}$, with:
$\alpha_1 \approx 5.36 \times 10^{-4}$, $\alpha_2 \approx -3.78 \times 10^{-3}$, $\alpha_3 \approx -9.23 \times 10^{-4}$, 
$\alpha_4 \approx 4.17 \times 10^{-3}$,  $b \approx 3.33$.
The 
set of vectors $\vec{x}\in \R^2$ such that $C(\vec{x}) = 0$ defines 
the decision curve (in red) between  labels $\minust$ and $\plust$. 
We consider a point $\vec{p} = (5, 1)$ and an adversarial region 
$P_1(\vec{p})=\{\vec{x}\in \R^2 \mid \max(|x_1-p_1|, |x_2-p_2|) \leq 1\}$, which is the $L_\infty$  ball of radius $1$ centered in $\vec{p}$
and can be exactly represented by  the interval in $\R^2$ 
(i.e., box) $P_1(\vec{p})=
(x_1 \in [4,6], x_2\in [0,2])$. As shown by the figure, this classifier $C$ is robust on $\vec{p}$ for this perturbation
because for all $\vec{x}\in P_1(\vec{p})$, $C(\vec{x})=C(\vec{p})=\plust$.

\begin{figure}[t]
\centering
  \includegraphics[scale=0.09]{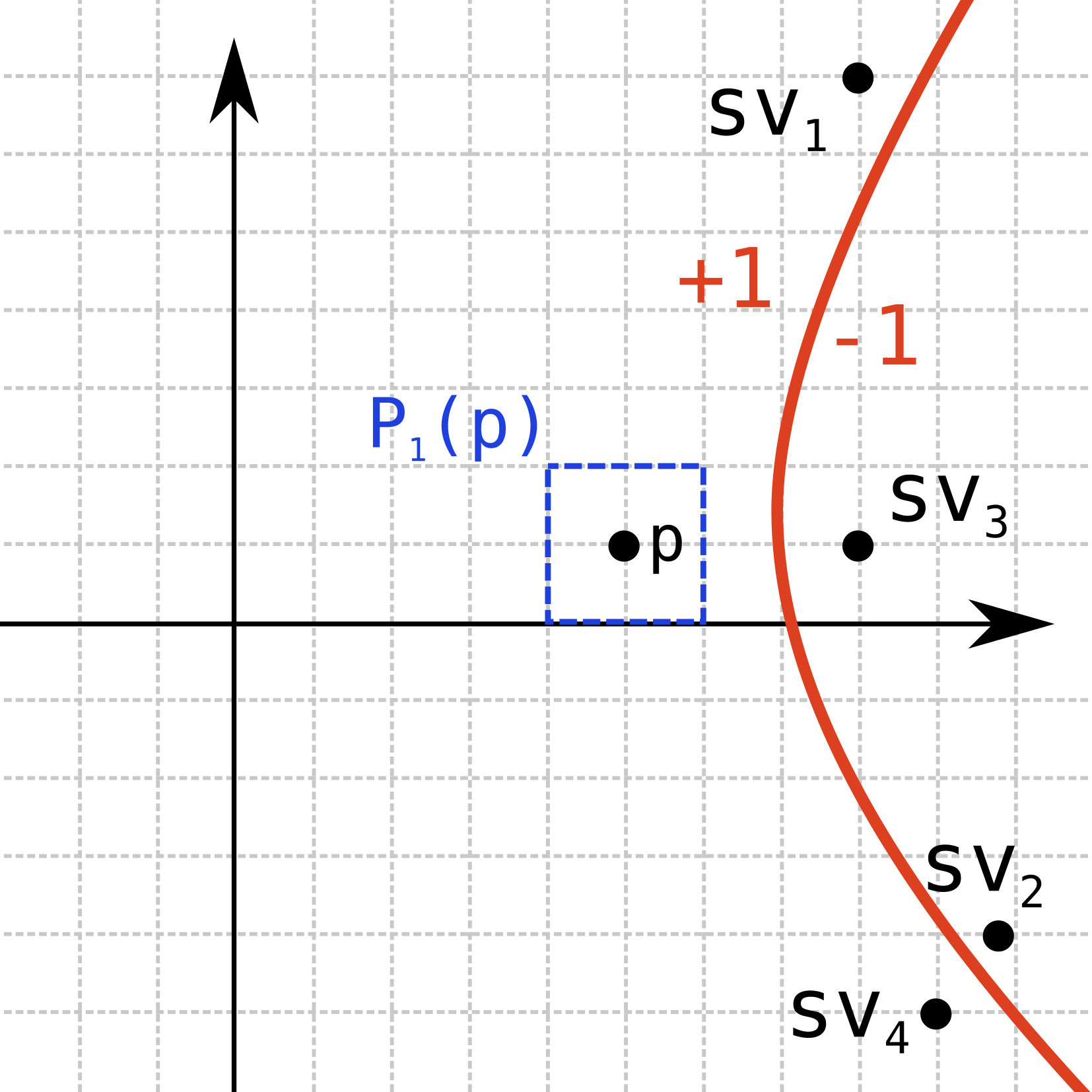}
  \caption{Example of SVM robustness.}
  \label{fig:example-intro}
\end{figure}%

\noindent
It turns out that the interval abstraction $C_\Int^\sharp$ of this classifier cannot prove the robustness of~$C$:
\begin{align*}
& C_\Int^\sharp(P_1(\vec{p})) = \sign(\textstyle \sum_{i=1}^4 \alpha_i y_i ((\vec{sv_i})_1[4,6] + (\vec{sv_i})_2[0,2])^2 +b)\\
& \quad = \sign(\alpha_1 y_1 [1024, 3844] + \alpha_2 y_2 [1024, 3600] +\alpha_3 y_3 [1024, 2500] + \alpha_4 y_4 [676, 2916] +b) \\
& \quad = \sign([-9.231596, 12.735958]) = \top
\end{align*}
Instead, the reduced affine form abstraction $C_{\RAF_2}^\sharp$ allows us to prove the robustness of $C$. Here, the perturbation 
$P_1(\vec{p})$ is exactly represented by the RAF 
$(\tilde{x}_1 = 5 + \epsilon_1, \tilde{x}_2 = 1 + \epsilon_2)$, where $\epsilon_1, \epsilon_2, \epsilon_r \in [-1, 1]$, 
and the abstract computation is as follows: 
\begin{align*}
& C_{\RAF_2}^\sharp (P_1(\vec{p}))  = \sign(\textstyle \sum_{i=1}^4 \alpha_i y_i [(\vec{sv_i})_1(5+\epsilon_1) + (\vec{sv_i})_2(1 +\epsilon_2)]^2 +b)\\
& \quad = \sign(
  \alpha_1 y_1 (47 + 8 \epsilon_1 + 7 \epsilon_2)^2 +
  \alpha_2 y_2 (46 + 10 \epsilon_1 - 4 \epsilon_2)^2 + \\ 
  & \quad \qquad\quad\;\,\, 
\alpha_3 y_3 (41 + 8 \epsilon_1 + \epsilon_2)^2 
 + \alpha_4 y_4 (40 + 9\epsilon_1 - 5 \epsilon_2)^2 
 + b) \\
& \quad = \sign(
    \alpha_1 y_1 (2322 + 752\epsilon_1 + 658\epsilon_2 + 112 \epsilon_r)
  + \alpha_2 y_2 (2232 + 920\epsilon_1 - 368\epsilon_2 + 80 \epsilon_r) + \\
& \quad \qquad\quad\;\,\, \alpha_3 y_3 (1746 + 656\epsilon_1 + 82\epsilon_2 + 16 \epsilon_r)
  + \alpha_4 y_4 (1706 + 720\epsilon_1 - 400\epsilon_2 + 90\epsilon_r)
  + b) \\
& \quad = \sign(1.635264 -0.680779\epsilon_1 +0.001047\epsilon_2 + 0.753025\epsilon_r) =\plust
\end{align*}
Here, the RAF analysis is able to prove that $C$ is robust on $P_1(\vec{p})$, since the final RAF has an interval range
$[0.200413, 3.070115]$ which is always positive. 

\section{Background}
\label{sec:svns-and-robustness}

\paragraph*{\textbf{Notation.}}
If $\vec{x},\vec{y} \in \Rn$, $z\in \R$ and $i\in [1,n]$ then $\vec{x}_i=\pi_i(\vec{x})\in \R$, 
$\vec{x}\cdot \vec{y} \ud \sum_i \vec{x}_i\vec{y}_i\in \R$, $\vec{x}+\vec{y}\in \Rn$, $z\vec{x}\in \Rn$, 
$\norm{\vec{x}}_2 \ud \sqrt{\vec{x}\cdot \vec{x}}\in \R$, $\norm{\vec{x}}_{\infty} \ud {\max\{ |\vec{x}_i| \mid i\in [1,n]\}} \in \R$,
denote, respectively, $i$-th component, dot product, vector addition, scalar multiplication, 
$L_2$ (i.e., Euclidean) and 
$L_\infty$ (i.e., maximum) norms  in $\Rn$. 
If $h:X \ra Y$ is any function then $h^c: \wp(X)\ra \wp(Y)$ defined by $h^c(S) \ud \{h(x)~|~x\in S\}$ 
denotes the standard collecting 
lifting of $h$, and, when clear from the context, we slightly abuse notation by using $h(S)$ instead of $h^c(S)$. 

\paragraph*{\textbf{Classifiers and Robustness.}} 
Consider a training dataset $T=\{(\vec{x_1},y_1),...,(\vec{x}_\mathbfit{N},y_N)\}$ $\subseteq$ $X\times L$, where $X\subseteq 
\R^n$ is the input space,
$\vec{x}_\mathbfit{i}\in X$ is called feature (or attribute) vector and $y_i$ is its label (or class) ranging into
the output space $L$.
A supervised learning algorithm  $\mathcal{SL}:\wp(X\times L) \ra (X \ra L)$ (also called trainer) computes a classifier function
$\mathcal{SL}(T):X \ra L$ ranging in some function subspace (also called hypothesis space). The learned classifier $\mathcal{SL}(T)$ is a function
that best fits the training dataset $T$ according to a principle of empirical risk minimization. 
The machine learning algorithm $\mathcal{SL}$
computes a classifier $\mathcal{SL}(T)$ by solving 
a complex optimization problem.  
The output space is assumed to be represented by real numbers, 
i.e., $L\subseteq \R$, and for binary classifiers with $|L|=2$, the standard assumption is that $L=\{\minust,\plust\}$. 

The standard threat model \cite{carlini,cacm18} of untargeted 
adversarial examples for a generic classifier $C:X \ra L$ is as follows. Given a valid input object $\vec{x}\in X$
whose correct
label is $C(\vec{x})$, an adversarial example for $\vec{x}$ is a legal input  $\vec{x}'\in X$ such that $\vec{x}'$ is a small perturbation of 
(i.e., 
is similar to) $\vec{x}$ and $C(\vec{x}')\neq C(\vec{x})$. An adversarial region is the set of perturbations $P(\vec{x})\subseteq X$ that 
the adversary is allowed to make to $\vec{x}$, meaning that a function $P:X\ra \wp(X)$ 
models an adversarial region.  
A perturbation $P(\vec{x})$ is typically modeled by some distance metric to quantify 
a similarity to $\vec{x}$, usually a $p$-norm, and  the most general model of perturbation 
simply requires that for all $\vec{x}\in X$, $\vec{x}\in P(\vec{x})$. 
A typical distance metric, as considered in
\cite{carlini,vechev-sp18,singh2019}, is determined by the $L_\infty$ norm: given (a small) 
$\delta >0$, $P_\delta^\infty(\vec{x})\ud \{\vec{x}'\in \R^n \mid
\norm{\vec{x}'-\vec{x}}_\infty \leq \delta\}=\{\vec{x}'\in \R^n \mid \forall i.\: \vec{x}'_i \in [\vec{x}_i-\delta,\vec{x}_i+\delta]\}$.
A classifier $C$ is defined to be robust on an input vector $\vec{x}$ 
for an adversarial region $P$ when for all $\vec{x}'\in P(\vec{x})$, $C(\vec{x}')=C(\vec{x})$ holds, denoted by
$\Rob(C,\vec{x},P) \udr \{C(\vec{x}') \mid \vec{x}'\in P(\vec{x})\}=\{C(\vec{x})\}$. This means that
the adversary cannot attack the classification of $\vec{x}$ made by $C$ 
by selecting input objects from the region $P(\vec{x})$. 

\paragraph*{\textbf{Support Vector Machines.}}
Several strategies and optimization techniques are available to train a SVM, but they are not relevant for our purposes 
(\cite{cs00} is a popular standard reference for SVMs).  
A SVM classifier partitions the input space $X$ into regions, each representing a class of the output space $L$. 
In its simplest formulation, the learning algorithm produces a linear SVM binary classifier with $L=\{\minust,\plust\}$ which relies 
on a hyperplane of $\R^n$ that separates training vectors labeled by $\minust$ from vectors labeled $\plust$.  
The training phase consists in finding (i.e., learning) such hyperplane. While many separating hyperplanes may exist, 
the SVM separating hyperplane has the maximum distance (called margin) with the closest vectors in the training dataset, 
because a maximum-margin learning algorithm statistically reduces the generalization error. This SVM hyperplane
is univocally represented by its normal vector  $\vec{w} \in \mathbb{R}^n$ and by a displacement scalar $b \in \mathbb{R}$, 
so that the hyperplane equation is $\vec{w} \cdot \vec{x} = b$. 
The classification of an input vector $\vec{x}\in X$ therefore boils down to determining the half-space containing $\vec{x}$, namely,
the linear binary classifier is the decision function $C(\vec{x}) = \sign(\vec{w} \cdot \vec{x} - b)$, where the case $\sign(0)=0$ is negligible 
(e.g.\ $\sign(0)$ may assign the class $\plust$). This linear classifier $\sign(\vec{w} \cdot \vec{x} - b)$ is 
in so-called primal form, while nonlinear classifiers are instead in dual form and based on a so-called kernel function. 

When the training set $T$ cannot be linearly separated in a satisfactory way, $T$ is projected into a much higher dimensional space 
through a projection map $\varphi: {\mathbb{R}^n \rightarrow \mathbb{R}^k}$, with $k > n$,
where $\varphi(T)$
may become linearly separable.  
Training a SVM classifier boils down to a  high-dimensional quadratic programming problem which can be solved either in its primal or dual form. 
When solving the dual problem, the projection function $\varphi$ is only involved in dot products 
$\varphi(\vec{x}) \cdot \varphi(\vec{y})$ in $\R^k$, 
so that this projection is not actually needed if these dot products in $\R^k$ can be equivalently formulated through a function 
$k: \mathbb{R}^n \times \mathbb{R}^n \rightarrow \mathbb{R}$, called kernel function, 
such that $k(\vec{x}, \vec{y}) = \varphi(\vec{x}) \cdot \varphi(\vec{y})$. 
Given a dataset $T=\{(\vec{x_1},y_1),...,(\vec{x}_\mathbfit{N},y_N)\}$, with $y_i\in \{\minust,\plust\}$, 
solving the dual problem for training the SVM classifier means finding a set
$\{\alpha_i\}_{i=1}^N \subseteq \mathbb{R}$, called set of weights, 
which maximizes the 
following function $f:\R^N \ra \R$:
\[
\textstyle
 \max f(\alpha_1,...,\alpha_N) \ud  \sum_{i = 1}^N \alpha_i - \frac{1}{2} \sum_{i, j = 1}^N \alpha_i \alpha_j y_i y_j k(\vec{x_i}, \vec{x_j})
\]
subject to: for all $i$, $0 \leq \alpha_i \leq c$, where $c\in \R_{>0}$ is a tuning parameter, 
and $\sum_{i = 1}^N \alpha_i y_i = 0$. 
This set of weights defines the following SVM binary classifier $C$: for all input $\vec{x}\in X \subseteq \mathbb{R}^n$, 
\begin{equation}\label{classifier}
C(\vec{x}) \ud \sign ( \textstyle [\textstyle\sum_{i = 1}^N \alpha_i y_i k(\vec{x_i}, \vec{x})] -b ) 
\end{equation}
for some offset parameter $b \in \mathbb{R}$. By defining $D_k(\vec{x}) \ud \sum_{i = 1}^N \alpha_i y_i k(\vec{x_i}, \vec{x})$, 
this classifier will be also denoted by $C(\vec{x}) = \sign(D_k(\vec{x})-b)$. 
In practice most weights $\alpha_i$ are $0$, hence only a subset of the training vectors $\vec{x_i}$ is actually used by the SVM classifier $C$, 
and these are called support vectors. 
By a slight abuse of notation, 
we will assume that $\alpha_i \neq 0$ for all $i\in [1,N]$, namely $\{\vec{x_i}\}_{i=1}^N \subseteq \R^n$ denotes the set of
support vectors extracted from the training set by the SVM learning algorithm for some kernel function. 
We will consider the most common and effective kernel functions used in SVM training: 
\begin{itemize}
\item[{\rm (i)\/}] linear kernel: $k(\vec{x},\vec{y})=\vec{x}\cdot \vec{y}$;  
\item[{\rm (ii)\/}] $d$-polynomial kernel: $k(\vec{x},\vec{y})= (\vec{x}\cdot \vec{y}+c)^d$ (common powers are $d=2,3,9$); \item[{\rm (iii)\/}] Gaussian radial basis function (RBF):   $k(\vec{x},\vec{y})= e^{-\gamma \norm{\vec{x}-\vec{y}}^2_2}$, for some $\gamma>0$.
\end{itemize}

\paragraph*{\textbf{SVM Multiclass Classification.}}
Multiclass datasets have a finite set of labels $L=\{y_1,...,y_m\}$ with $m>2$. The standard approach to multiclass classification problems 
consists in a reduction into  multiple binary classification problems using one of the following two simple strategies \cite{multiclass}.
In the ``one-versus-rest'' (ovr) strategy, $m$ binary classifiers are trained, where each binary classifier 
$C_{i,\bar{i}}$ determines whether an input vector $\vec{x}$
belongs to the class $y_i \in L$ or not by assigning a real confidence score for its decision 
rather than just a label, so that the class $y_m$ with the highest-output confidence score is the class assigned to $\vec{x}$.
Multiclass SVMs using this ovr approach  might not work satisfactorily
because the ovr approach often leads to unbalanced datasets already for a few classes 
due to unbalanced partitions into $y_i$ and $L\smallsetminus \{y_i\}$. 
\\
\indent
The most common solution \cite{multiclass} is to follow a ``one-versus-one'' (ovo) approach, where $m (m - 1) / 2$ binary 
classifiers $C_{\{i,j\}}$ are trained on the restriction of the original training set to vectors with 
labels in $\{y_i,y_j\}$, with $i\neq j$, so that 
each $C_{\{i,j\}}$ determines whether an input vector 
belongs (more) to the class $y_i$ or (more to) $y_j$. Given an input vector $\vec{x}\in X$ each of these $m (m - 1) / 2$ binary classifiers $C_{\{i,j\}}(\vec{x})$ 
assigns a ``vote'' to one class in $\{y_i,y_j\}$, and at the end the class with the most votes wins, i.e., the argmax of the function
$\votes(\vec{x},y_i) \ud |\{j\in \{1,...,m\} \mid j\neq i,\; C_{\{i,j\}}(\vec{x})=y_i\}|$ is the winning class of $\vec{x}$.
Draw is a downside of the ovo strategy because it may well be the case
that for some (regions of) input vectors
multiple classes collect the same number of votes and therefore no classification can be done. In case of draw, a common
strategy \cite{multiclass} is to output any of the winning classes (e.g., the one with the smaller index). However, 
since our primary 
focus will be on soundness
of abstract classifiers, we need to model an ovo multiclass classifier by a function $M_{\ovo}:X\ra \wp(L)$ defined by 
\[
M_{\ovo}(\vec{x}) \ud \{y_k \in L \mid k\in \argmax_{i\in \{1,...,m\}} \votes(\vec{x},y_i)\},
\]
so that $|M_{\ovo}(\vec{x})|>1$ models a draw in the ovo voting. 
Our experiments used SVM multi-classifiers which have been trained 
with the ovo voting procedure, which is the standard multi-classification 
scheme adopted by the popular
Scikit-learn framework \cite{scikit}.

\paragraph*{\textbf{Numerical Abstractions.}}
According to the most general definition, a numerical abstract domain is a tuple ${\tuple{A,\leq_A,\gamma}}$ where 
$\tuple{A,\leq_A}$ is at least a preordered set and the concretization function $\gamma:A\ra \wp(\Rn)$, with $n\geq 1$, 
preserves the relation $\leq_A$, 
i.e., $a\leq_A a'$ implies $\gamma(a)\subseteq \gamma(a')$ (i.e., $\gamma$ is monotone). Thus, $A$ plays the usual role of set of 
symbolic representations for sets of vectors of $\Rn$. Well-known examples of numerical abstract domains include
intervals, zonotopes, octagons, convex polyhedra
(we refer to the tutorial \cite{min17}). Some numerical domains just form preorders
(e.g.,  standard representations of 
octagons by DBMs allow multiple representations) while other domains give rise to posets
(e.g., intervals). 
Of course, any preordered abstract domain $\tuple{A,\leq_A,\gamma}$ can be quotiented to a poset $\tuple{A_{/\cong},\leq_A,\gamma}$  where $a\cong a'$ iff $a\leq_A a'$ and $a'\leq_A a$. 
While a monotone 
concretization $\gamma$ is enough for reasoning about soundness of static analyses on numerical domains, 
the notion of best correct approximation of concrete sets relies on the existence of 
an abstraction function $\alpha:\wp(\Rn)\ra A$ which requires that $\tuple{A,\leq_A}$ is (at least) a poset and 
that the pair $(\alpha, \gamma)$ forms a Galois connection, 
i.e.\ for any $X \subseteq \Rn, a \in A$, $\alpha(X) \leq_A a \Lra X\subseteq \gamma(a)$ holds, which becomes a Galois insertion when $\gamma$ is injective 
(or, equivalently, $\alpha$ is surjective). 
Several numerical domains admit a definition through
Galois connections, in particular intervals and octagons, while 
some domains do not have an abstraction map, notably zonotopes and convex polyhedra. 

Consider a concrete $k$-ary real operation $f:\wp(\Rn)^k\ra \wp(\Rn)$, for some $k\in \bN_{>0}$, and a corresponding
abstract map $f^\sharp:A^k\ra A$. Then, $f^\sharp$ is a correct (or sound) approximation of $f$ when
$f \circ \vv{\gamma} \mathrel{\subseteq} \gamma \circ f^\sharp$ holds,
where $\vv{\gamma}:A^k \ra \wp(\Rn)^k$ denotes
the $k$-th product of $\gamma$. 
Also, $f^\sharp$ is exact (or forward-complete) 
when $f\circ \vv{\gamma} = \gamma\circ f^\sharp$ holds. When 
a Galois connection $(\alpha,\gamma)$ for $A$ exists, if $f^\sharp$ is exact then it coincides with 
the best correct approximation (bca) of $f$ on $A$, which is the abstract 
function $\alpha \circ f \circ \vv{\gamma}:A^k \ra A$.

The abstract domain $\Int$ of numerical intervals on the poset of real numbers $\tuple{\R\cup \{\text{--}\infty,\text{+}\infty\},\leq}$
 is defined as usual \cite{CC77}:
\[
\Int \ud \{\bot,[\text{--}\infty,\text{+}\infty]\} \cup 
\{[l,u] ~|~ l,u\in \R, l\leq u\} \cup \{[\text{--}\infty,u] ~|~ u\in \R\} 
\cup \{[l,\text{+}\infty] ~|~ l \in \R \}.
\]
The concretization map $\gamma:\Int \ra \wp(\R)$ is standard:
\begin{align*}
&\gamma(\bot)\ud \varnothing,\quad \gamma(\top)\ud \R,\quad\gamma([l,u])\ud \{x\in \R \mid l \leq x\leq u\},\\
&\gamma([\text{--}\infty,u])\ud \{x\in \R \mid x\leq u\},\quad \gamma([l,\text{+}\infty])\ud \{x\in \R \mid l\leq x\}.
\end{align*}
Intervals admit an abstraction map $\alpha:\wp(\R) \ra \Int$ such that $\alpha(X)$ is the 
least interval containing $X$, 
i.e., $\alpha(X) \ud [\inf X, \sup X]$ if $X\neq \varnothing$ with $\inf X, \sup X\in \R\cup\{\text{--}\infty,\text{+}\infty\}$, 
while $\alpha(\varnothing)=\bot$.
Thus, $(\alpha,\gamma)$ define a Galois insertion between $\tuple{\Int,\sqsubseteq}$ and $\tuple{\wp(\R),\subseteq}$ w.r.t.\ the standard interval partial order $I\sqsubseteq I' \Lra \gamma(I)\subseteq \gamma(I')$.

\section{Abstract Robustness Verification Framework}\label{sec-arvf}
Let us describe a sound abstract robustness verification framework for binary and multiclass SVM classifiers.   
We first consider a binary classifier $C:X \ra \{\minust,\plust\}$, where $C(\vec{x}) = \sign(D(\vec{x}) - b)$ and $D:X\ra \R$ has been trained 
for some kernel function $k$. We also consider a given adversarial region $P: X \ra \wp(X)$ for $C$. 
Consider a numerical abstract domain $\tuple{A,\leq_A}$ whose abstract values represent sets of input vectors
for a binary classifier $C$, namely $\gamma:A\ra \wp(X)$, where $X$ is the input space of $C$. We use $A_n$ to emphasize that $A$ is used as an 
abstraction of properties of $n$-dimensional vectors in $\R^n$, so that $A_1$ denotes that $A$ is used as an abstraction of 
sets of scalars in $\wp(\R)$. 
 
\begin{definition}[Sound Abstract Classifiers and Robustness Verifiers]\rm
A \emph{sound abstract binary classifier} on $A$ is an algorithm $C^\sharp:A\ra 
\{ \minust, \plust, \qmt\}$ such that for all $a\in A$,
$C^\sharp(a) \neq \qmt \Ra \forall \vec{x}\in \gamma(a).\: C(\vec{x})=C^\sharp(a)$. 

\noindent
An abstract perturbation is a computable function $P^\sharp:X\ra A$. 
The pair $\tuple{C^\sharp,P^\sharp}$ is a \emph{sound robustness verifier} for $C$ w.r.t.\ $P$ when
for all $\vec{x}\in X$, if $C^\sharp(P^\sharp(\vec{x}))=C(\vec{x})$ then $\Rob(C,\vec{x}, P)$ holds.  
\qed
\end{definition}

Hence, a sound abstract classifier $C^\sharp$ is allowed to output a ``don't know'' answer $\qmt$ on abstract inputs, but when it provides 
a classification this must be correct. Also, a sound abstract verifier $\tuple{C^\sharp,P^\sharp}$ 
proves the robustness of an input $\vec{x}$ 
when the classification $C(\vec{x})$ is preserved by $C^\sharp$ on the abstract perturbation $P^\sharp(\vec{x})$. 

The key step for designing an abstract robustness verifier 
is to design a sound abstract version of the trained function $D:X\ra \R$ on the abstraction $A$, namely
an algorithm $D^\sharp: A_n \ra A_1$ such that, for all $a\in A_n$, $D^c(\gamma(a)) \subseteq \gamma(D^\sharp(a))$. 
We also need that the abstraction $A$ is endowed with a
sound approximation
of the Boolean test $\sign_b(\cdot): \R \ra \{ \minust, \plust\}$ for any bias $b\in \R$, where 
$\sign_b(x) \ud \textbf{if}~x\geq b~\textbf{then}~{\plust}~\textbf{else}~{\minust}$. 
Hence, we require a computable abstract
function $\sign_b^\sharp : A_1 \ra \{ \minust, \plust, \qmt\}$  which is sound for $\sign_b$, that is,
for all $a\in A_1$,
$\sign_b^\sharp(a)\neq \qmt \Ra \forall x\in \gamma(a). \sign_b(x)=\sign_b^\sharp(a)$.  
These hypotheses therefore provide a straightforward sound abstract classifier $C^\sharp: A \ra \{\minust,\plust,\qmt\}$ defined 
as follows: 
\[
C^\sharp(a) \ud \sign_b^\sharp(D^\sharp(a)).
\]
Also, the  abstract perturbation
$P^\sharp:X \ra A_n$ has to be a sound approximation of $P$, meaning that
for all $\vec{x}\in X$, $P(\vec{x})\subseteq \gamma(P^\sharp(\vec{x}))$. It turns out that 
these hypotheses entail the soundness of the robustness verifier.

\begin{lemma}\label{srv-lemma}
$C^\sharp$ is a sound abstract classifier and
$\tuple{C^\sharp,P^\sharp}$ is a sound robustness verifier.
\end{lemma}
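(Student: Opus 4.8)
The plan is to prove the two assertions by simply unwinding the definitions and chaining the three soundness hypotheses available by assumption (on $D^\sharp$, on the abstract test $\sign_b^\sharp$, and on the abstract perturbation $P^\sharp$). No fixpoint, least-upper-bound or widening reasoning is involved: the argument is purely compositional.

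First I would check that $C^\sharp$ is a sound abstract classifier. Fix $a\in A$ and assume $C^\sharp(a)=\sign_b^\sharp(D^\sharp(a))\neq\qmt$. By soundness of $\sign_b^\sharp$ for $\sign_b$, every scalar $y\in\gamma(D^\sharp(a))$ satisfies $\sign_b(y)=\sign_b^\sharp(D^\sharp(a))$. Now take an arbitrary $\vec{x}\in\gamma(a)$. Soundness of $D^\sharp$ gives $D^c(\gamma(a))\subseteq\gamma(D^\sharp(a))$, so $D(\vec{x})\in\gamma(D^\sharp(a))$, and therefore $\sign_b(D(\vec{x}))=\sign_b^\sharp(D^\sharp(a))=C^\sharp(a)$. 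Since, by definition of $\sign_b$ (and the convention $\sign(0)=\plust$), one has $\sign_b(D(\vec{x}))=\sign(D(\vec{x})-b)=C(\vec{x})$, we conclude $C(\vec{x})=C^\sharp(a)$, which is exactly the required soundness condition for $C^\sharp$.

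Then I would derive soundness of the verifier $\tuple{C^\sharp,P^\sharp}$. Fix $\vec{x}\in X$ and assume $C^\sharp(P^\sharp(\vec{x}))=C(\vec{x})$. Since $C(\vec{x})\in\{\minust,\plust\}$, in particular $C^\sharp(P^\sharp(\vec{x}))\neq\qmt$, so the first part applies with $a\defeq P^\sharp(\vec{x})$: for every $\vec{x}'\in\gamma(P^\sharp(\vec{x}))$ we get $C(\vec{x}')=C^\sharp(P^\sharp(\vec{x}))=C(\vec{x})$. Soundness of $P^\sharp$ gives $P(\vec{x})\subseteq\gamma(P^\sharp(\vec{x}))$, hence $C(\vec{x}')=C(\vec{x})$ for all $\vec{x}'\in P(\vec{x})$; combined with the general model assumption $\vec{x}\in P(\vec{x})$, this yields $\{C(\vec{x}')\mid\vec{x}'\in P(\vec{x})\}=\{C(\vec{x})\}$, i.e.\ $\Rob(C,\vec{x},P)$.

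The proof has no genuine difficulty; the only points requiring a little care are bookkeeping ones, and are what I would flag as the ``main obstacle'' in an otherwise routine argument: matching the two sign conventions so that $\sign_b\circ D$ really coincides with $C$, and using $\vec{x}\in P(\vec{x})$ to upgrade the inclusion ``$C$ is constant on $P(\vec{x})$'' to the set equality demanded by the definition of $\Rob$.
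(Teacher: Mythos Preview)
Your proposal is correct and follows essentially the same approach as the paper: unfold the definition of $C^\sharp$, apply soundness of $\sign_b^\sharp$ and then of $D^\sharp$ to obtain the first claim, and then instantiate with $a=P^\sharp(\vec{x})$ together with soundness of $P^\sharp$ for the second claim. Your write-up is in fact slightly more explicit than the paper's (you spell out that $\sign_b\circ D=C$ and that $\vec{x}\in P(\vec{x})$ is what upgrades constancy on $P(\vec{x})$ to the set equality defining $\Rob$), but the structure of the argument is identical.
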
 
\begin{proof}
Let $a\in A$ and assume that 
$C^\sharp(a) = \sign_b^\sharp(D^\sharp(a))  \neq \top$. Then, 
by soundness of $\ok{\sign^\sharp_b}$, we have that for all $y\in \gamma( D^\sharp(a))$, 
$\sign_b(y) = C^\sharp(a)$, and in turn, by soundness of $D^\sharp$, 
for all $\vec{x}\in \gamma(a)$,
$\sign_b(D(\vec{x})) = C^\sharp(a)$.   \\
Let us show that $\tuple{C^\sharp,P^\sharp}$ is a sound verififer. Let $\vec{x}\in X$ and  assume that $C^\sharp(P^\sharp(\vec{x}))=C(\vec{x})$. 
Since $C^\sharp(P^\sharp(\vec{x}))\neq \qmt$, by soundness of $C^\sharp$ in point~(i), we have that for all
$\vec{x}' \in \gamma(P^\sharp(\vec{x}))$, $C(\vec{x}')=C^\sharp(P^\sharp(\vec{x}))=C(\vec{x})$. Thus, by soundness of $P^\sharp$, for all $\vec{x}'\in P(\vec{x})$,
$C(\vec{x}')=C(\vec{x})$, i.e., $\Rob(C,\vec{x}, P)$ holds.  \qed
\end{proof}

Thus, if $T$ is a test subset of the dataset used for training the classifier 
$C$ then we may correctly assert that  $C$ is provably $q\%$-robust on $T$ for the perturbation $P$
when the abstract robustness verification of Lemma~\ref{srv-lemma}~(ii) is able to check that $C$ is robust on 
$q\%$ of the test vectors in $T$. Of course, by soundness, this means that $C$ is certainly robust on \emph{at least} 
$q\%$ of  the inputs in $T$, while on the remaining $(100-q)\%$ of $T$ we do not know: these could be spurious or real
unrobust input vectors. 

In order to 
design a sound abstract version of  $D(\vec{x}) = \sum_{i = 1}^N \alpha_i y_i k(\vec{x_i}, \vec{x})$ 
we surely need sound approximations on $A_1$
of scalar multiplication and addition. Thus, we require a sound abstract scalar multiplication 
$\lambda a.za: A_1 \ra A_1$, for any $z\in \R$,  such that for all $a\in A_1$, $z\gamma(a) 
\subseteq \gamma(za)$, and a sound addition $+^\sharp: A_1 \times A_1 \ra A_1$ such that for all $a,a'\in A_1$, 
$\gamma(a) + \gamma(a') \subseteq \gamma(a+^\sharp a')$, and we use $\sum_{i\in I}^\sharp a_i$ to denote 
an indexed abstract summation. 

\subsection{Linear Classifiers}\label{alc-sec}
Sound approximations of scalar multiplication and addition are enough for designing a sound robustness verifier for
a linear classifier. As a preprocessing step,  
for a binary classifier $C(\vec{x}) = \sign ([\sum_{i = 1}^N \alpha_i y_i (\vec{x_i} \cdot \vec{x})] -b)$
which has been trained for the linear kernel, 
we preliminarly compute the hyperplane normal vector $\vec{w}\in \R^n$: for all $j\in[1,n]$, 
$\vec{w}_j \ud \sum_{i = 1}^N \alpha_i y_i \vec{x_i}_j$, so that for all $\vec{x}\in \R^n$, 
$\vec{w}\cdot \vec{x} = \sum_{j=1}^n \vec{w}_j \vec{x}_j = \sum_{i = 1}^N \alpha_i y_i (\vec{x_i} \cdot \vec{x})$. 
Thus, $C(\vec{x})=\sign([\sum_{j=1}^n \vec{w}_j \vec{x}_j]-b)$
is the linear classifier in primal form, whose robustness can be abstractly verified by resorting to just sound abstract scalar multiplication and addition
on $A_1$. The  noteworthy advantage of abstracting a classifier in primal form is that each component of 
the input vector $\vec{x}$ occurs just once in 
$\sign([\sum_{j=1}^n \vec{w}_j \vec{x}_j]-b)$, while in the dual form $\sign ([\sum_{i = 1}^N \alpha_i y_i (\vec{x_i} \cdot \vec{x})] -b)$
each component $\vec{x}_j$ occurs exactly $N$ times (one for each support vector), so that a precise abstraction of
this latter dual form would be able to represent 
the correlation between (the many) multiple occurrences of each $\vec{x}_j$. 

\subsection{Nonlinear Classifiers}
Let us consider a nonlinear kernel binary classifier 
$C(\vec{x}) = \sign(D(\vec{x}) - b)$, where $D(\vec{x})= \sum_{i = 1}^N \alpha_i y_i k(\vec{x_i}, \vec{x})$ 
and $\{\vec{x_i}\}_{i=1}^N \subseteq \R^n$ is the set of
support vectors for the  kernel function $k$. 
Thus,  what we additionally need  here is a sound abstract kernel function $k^\sharp:\R^n \times A_n \ra A_1$ such that for any
support vector $\vec{x_i}$ and $a\in A_n$, 
$\{k(\vec{x_i},\vec{x}) \mid \vec{x}\in \gamma(a)\} \subseteq \gamma(k^\sharp(\vec{x_i},a))$. 
Let us consider the polynomial and 
RBF kernels.

For a $d$-polynomial kernel $k(\vec{x},\vec{y}) = (\vec{x}\cdot \vec{y} + c)^d$, we need
sound approximations of the unary dot product $\lambda \vec{y}.\, \vec{x}\cdot \vec{y}:\R^n \ra \R$, for any given
$\vec{x}\in \R^n$, and 
of the $d$-power function $(\cdot)^d: \R \ra \R$. Of course, a sound nonrelational 
approximation of  $\lambda \vec{y}.\,\vec{x}\cdot \vec{y}=
\sum_{j=1}^n \vec{x}_j \vec{y}_j$
can be obtained simply by using sound abstract scalar multiplication and addition on $A_1$.  
Moreover, a sound abstract binary multiplication provides 
a straightforward definition of 
a sound abstract $d$-power function $(\cdot)^{d^\sharp}: A_1 \ra A_1$. 
If   $*^\sharp: A_1 \times A_1 \ra A_1$ is a sound abstract multiplication
such that for all $a,a'\in A_1$, 
$\gamma(a)*\gamma(a') \subseteq \gamma(a*^\sharp a')$, then a sound 
abstract $d$-power procedure can be defined simply by iterating the abstract multiplication $*^\sharp$
as follows:  for any $a\in A_1$, $d\geq 2$,
\begin{align*}
a^{d^\sharp} \ud \{r^\sharp := a*^\sharp a;~ \textbf{for}~ j:=3 ~\textbf{to}~d ~\textbf{do}~ r^\sharp := r^\sharp *^\sharp a;~ \textbf{return}~ r^\sharp\}
\end{align*}

For the RBF kernel $k(\vec{x},\vec{y})= e^{-\gamma \norm{\vec{x}-\vec{y}}^2_2} =  e^{-\gamma (\vec{x}-\vec{y})\cdot(\vec{x}-\vec{y})}$, 
for some $\gamma>0$, we need sound approximations of the self-dot product $\lambda \vec{x}. \vec{x}\cdot \vec{x}:\R^n \ra \R$, 
which is the squared Euclidean distance,   
and of the exponential $e^x: \R \ra \R$. 
Let us observe that 
sound abstract addition and multiplication induce a sound nonrelational approximation of the self-dot product:
 for all $\tuple{a_1,...,a_n}\in A_n$, 
$\tuple{a_1,...,a_n} \cdot^\sharp \tuple{a_1,...,a_n} \ud \sum^{\sharp n}_{j=1} a_j *^\sharp a_j$. 
Finally, we require 
a sound abstract exponential $e^{\sharp (\cdot)}: A_1 \ra A_1$
such that for all $a\in A_1$, $\{e^x \mid x\in \gamma(a)\} \subseteq \gamma(e^{\sharp a})$. 

\subsection{Abstract Multi-Classification}  \label{sec-absMC}
Let us consider multiclass classification for a set of labels $L=\{y_1,...,y_m\}$, with $m>2$.
It turns out that the multi-classification approaches based on a reduction to multiple binary classifications 
such as ovr and ovo introduce a further approximation in the abstraction process, because these
reduction strategies need to be soundly approximated. 
Let $M:X\ra \wp(L)$ be a multi-classifier, 
as modeled in Section~\ref{sec:svns-and-robustness} and an abstraction $A$ of $\wp(X)$. 
\begin{definition}[Sound Abstract Multi-Classifiers]\rm
A \emph{sound abstract multi-classifier} on $A$ is an algorithm $M^\sharp:A\ra \wp(L)$ such that
for all $a\in A$ and $\vec{x}\in \gamma(a)$, 
$M(\vec{x}) \subseteq M^\sharp(a)$. 
\qed
\end{definition}
Thus, an abstract multi-classifier $M^\sharp$ on some $a$ always provides an over-approximation to the classes
decided by $M$ on some input $\vec{x}$ represented by $a$, so that an output $L$ plays the role of a 
``don't know'' answer 
for $M^\sharp$.  

Let us first consider the ovr strategy and, for all $j\in [1,m]$, 
let $C_{j,\bar{j}}: X \ra \R$ denote the corresponding binary scoring classifier of 
$y_j$-versus-rest where 
$C_{j,\bar{j}}(\vec{x}) \ud D_j(\vec{x}) - b_j$. 
In order to have a sound approximation of ovr multi-classification, besides having $m$ sound abstract classifiers
$C_{j,\bar{j}}^\sharp: A_n \ra A_1$ such that for all $a\in A_n$, $\{C_{j,\bar{j}}(\vec{x})\in \R \mid \vec{x}\in
\gamma(a)\} \subseteq  \ok{\gamma(C^\sharp_{j,\bar{j}} (a))}$, it is needed
a sound abstract maximum function $\max^\sharp : (A_1)^m \ra \{1,...,m,\qmt\}$ such that 
if $(a_1,...,a_m)\in (A_1)^m$ and 
$(z_1,...,z_m)\in \gamma(a_1)\times ... \times \gamma(a_m)$ then 
$\max^\sharp(a_1,...,a_m) \neq \qmt \; \Ra \; 
\max(z_1,...,z_m) \in \gamma(a_{\max^\sharp(a_1,...,a_m)})$ holds. Clearly, as soon as the abstract function $\max^\sharp$ outputs $\qmt$, 
this abstract multi-classification scheme is inconclusive.
\begin{examplebf}\rm
Let $m=3$ and assume 
that an ovr multi-classifier $M_{\ovr}$ is robust on $\vec{x}$ for some adversarial region 
$P$ as a consequence of the following ranges of scores:  
for all $\vec{x}'\in P(\vec{x})$, 
$-0.5 \leq C_{1,\bar{1}}(\vec{x}') \leq -0.2$, $3.5\leq C_{2,\bar{2}}(\vec{x}') \leq 4$ and
$2 \leq C_{3,\bar{3}}(\vec{x}') \leq 3.2$. In fact, since the least score of $C_{2,\bar{2}}$ on the region $P(\vec{x})$ is  
greater than the greatest scores of $C_{1,\bar{1}}$ and $C_{3,\bar{3}}$ on $P(\vec{x})$, these ranges imply 
that for all $\vec{x}'\in P(\vec{x})$, $M(\vec{x}')=y_2$.
However, even in this advantageous scenario, on the abstract side we could not be able to 
infer that $C_{2,\bar{2}}$ always prevails over $C_{1,\bar{1}}$ and $C_{3,\bar{3}}$. For example, for 
the interval abstraction,
some interval binary classifiers for a sound perturbation $P^\sharp(\vec{x})$
could output the following sound intervals:
$\ok{C^\sharp_{1,\bar{1}}(P^\sharp(\vec{x}))} = [-1,-0.1]$, $\ok{C^\sharp_{2,\bar{2}}(P^\sharp(\vec{x}))} = [3.4,4.2]$ and
$\ok{C^\sharp_{3,\bar{3}}(P^\sharp(\vec{x}))} = [1.5,3.5]$. In this case, despite that each 
abstract binary classifier $\ok{C^\sharp_{i,\bar{i}}}$ is able to prove that $C_{i,\bar{i}}$ is robust on $\vec{x}$ for $P$ (because
the output intervals do not include 0), 
the ovr strategy here does not allow to conclude that the multi-classifier $M_{\ovr}$ is
robust on $\vec{x}$, because the lower bound of the interval approximation provided
by $C^\sharp_{2,\bar{2}}$ is not above the interval upper bounds of $\ok{C^\sharp_{1,\bar{1}}}$ and $\ok{C^\sharp_{3,\bar{3}}}$. In such a case,  
a sound abstract multi-classifier based on ovr cannot prove the robustness
of $M_{\ovr}$ for $P(\vec{x})$.
\qed
\end{examplebf}

Let us turn to the ovo approach which relies on $m (m - 1) / 2$ binary 
classifiers $C_{\{i,j\}}: X\ra \{i,j\}$. 
Let us assume that for all pairs $i\neq j$, a sound abstract binary classifier 
$\ok{C^\sharp_{\{i,j\}}}: A \ra \{y_i,y_j, \qmt\}$ is defined. Then, an abstract ovo multi-classifier 
$\ok{M_{\ovo}^\sharp}: A\ra \wp(L)$ can be defined as follows. 
For all $i\in \{1,...,m\}$ and $a\in A$, 
let $\votes^\sharp(a,y_i)\in \Int_{\N}$ be an interval of nonnegative integers used 
by the following abstract voting procedure \textsf{AV}, where $+^{\scriptscriptstyle\Int}$ denotes interval addition:
\begin{align}\label{def-av}
&\textbf{forall}~i\in [1,m]~\textbf{do}~\votes^\sharp(a,y_i):=[0,0];\nonumber\\
&\textbf{forall}~i,j\in [1,m] ~\textbf{s.t.}~i\neq j ~\textbf{do} \nonumber\\
&\;\;\: \textbf{if}~C^\sharp_{\{i,j\}}(a)=y_i~\textbf{then}~ \votes^\sharp(a,y_i) := \votes^\sharp(a,y_i) +^{\scriptscriptstyle\Int} [1,1];\\
&\;\;\: \textbf{elseif}~C^\sharp_{\{i,j\}}(a)=y_j~\textbf{then}~ \votes^\sharp(a,y_j) := \votes^\sharp(a,y_j) +^{\scriptscriptstyle\Int} [1,1];\nonumber\\
&\;\;\: \textbf{else}\votes^\sharp(a,y_i)\! := \votes^\sharp(a,y_i) +^{\scriptscriptstyle\Int} \! [0,1];
                      \votes^\sharp(a,y_j)\! := \votes^\sharp(a,y_j) +^{\scriptscriptstyle\Int} \! [0,1]; \nonumber
\end{align}
Let us notice that the last else branch is taken when $C^\sharp_{\{i,j\}}(a)=\qmt$, 
meaning that the abstract classifier $\ok{C^\sharp_{\{i,j\}}(a)}$ is not able to decide between
$y_i$ and $y_j$, so that in order to preserve the soundness of the abstract voting procedure, 
we need to increment just 
the upper bounds of the interval ranges of votes for both classes $y_i$ and $y_j$ while
their lower bounds do not have to change. Let us denote $\votes^\sharp(a,y_i) = [v^{\min}_i,v^{\max}_i]$. 
Hence, at the end of the \textsf{AV} procedure, 
$[v^{\min}_i,v^{\max}_i]$ provides an interval approximation of concrete votes as follows:
\begin{align*}
|\{j\neq i \mid \forall \vec{x}\in \gamma(a). C_{\{i,j\}}(\vec{x})=i\}| = v^{\min}_i
\quad\qquad 
|\{j\neq i \mid \exists \vec{x}\in \gamma(a). C_{\{i,j\}}(\vec{x})=i\}|  \leq v^{\max}_i
\end{align*}
The corresponding abstract multi-classifier is then defined as follows:
\begin{align}\label{eq-multiclass}
M_{\ovo}^\sharp(a) \ud \{ y_i \in L \mid \forall j\neq i.\, v^{\min}_j \leq v^{\max}_i\}
\end{align}
Hence, one may have an intuition for this definition by considering that a class $y_j$ is not in $\ok{M_{\ovo}^\sharp(a)}$ when there exists a different
class $y_k$ whose lower bound of votes is certainly strictly greater than 
the upper bound of votes for $y_i$.  
For example, for $m=4$, 
if $\votes^\sharp(a,y_1)=[4,4]$, $\votes^\sharp(a,y_2)=[0,2]$, $\votes^\sharp(a,y_3)=[4,5]$, $\votes^\sharp(a,y_4)=[1,3]$ then $M_{\ovo}^\sharp(a)=\{y_1,y_3\}$.  
\begin{examplebf}
\rm
Assume that $m=3$ and that
for all $\vec{x}'\in P(\vec{x})$, $M_{\ovo}(\vec{x}')=\{y_3\}$ because we have that
$\argmax_{i=1,2,3} \votes(\vec{x}',y_i) = \{3\}$. This means that a draw never happens for $M_{\ovo}$, 
so that for all $\vec{x}'\in P(\vec{x})$, $C_{\{1,3\}}(\vec{x}') = y_3$ and
$C_{\{2,3\}}(\vec{x}') = y_3$ certainly hold (because $m=3$). Let us also assume that 
$\{C_{\{1,2\}}(\vec{x}') \mid \vec{x}'\in P(\vec{x})\} =\{y_1,y_2\}$. Then, for a sound abstract perturbation 
$P^\sharp(\vec{x})$, we necessarily have that
$C^\sharp_{\{1,2\}}(P^\sharp(\vec{x}))=\top$. If we assume that
$C^\sharp_{\{1,3\}}(P^\sharp(\vec{x}))=y_3$ and
$C^\sharp_{\{2,3\}}(P^\sharp(\vec{x}))=\top $ hold then we have that
$M_{\ovo}^\sharp(P^\sharp(\vec{x})) = \{y_1,y_2,y_3\}$ because $\votes(P^\sharp(\vec{x}),y_1)=[0,1]$, $\votes(P^\sharp(\vec{x}),y_2)=[0,2]$
and $\votes(P^\sharp(\vec{x}),y_3)=[1,2]$. 
Therefore, in this case, $M_{\ovo}^\sharp$
is not able to prove the robustness of $M_{\ovo}$ on $\vec{x}$. 
Let us notice that 
the source of imprecision in this multi-classification is confined to 
the binary classifier $C^\sharp_{2,3}$ rather than the abstract voting $\textsf{AV}$ strategy. 
In fact, if we have that 
$C^\sharp_{\{1,3\}}(P^\sharp(\vec{x}))=\{y_3\}$ and
$C^\sharp_{\{2,3\}}(P^\sharp(\vec{x}))=\{y_3\}$ then 
$M_{\ovo}^\sharp(P^\sharp(\vec{x})) = \{y_3\}$, thus proving the robustness of $M$. 
\qed
\end{examplebf}

\begin{lemma}\label{lemma-ovo}
Let $M_{\ovo}$ be an ovo multi-classifier based on binary classifiers $C_{\{i,j\}}$. 
If the abstract multi-classifier {\rm $M_{\ovo}^\sharp$} defined in \eqref{eq-multiclass} 
is based on sound abstract binary classifiers $C^\sharp_{\{i,j\}}$ then {\rm $M_{\ovo}^\sharp$}
is sound for $M_{\ovo}$. 
\end{lemma}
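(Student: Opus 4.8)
The statement to prove is the soundness condition from the preceding definition: for every $a\in A$ and every $\vec{x}\in\gamma(a)$ one has $M_{\ovo}(\vec{x})\subseteq M_{\ovo}^\sharp(a)$. The plan is to structure the proof in two steps. Step~1 shows that the interval of votes $\votes^\sharp(a,y_i)=[v_i^{\min},v_i^{\max}]$ returned by the procedure \textsf{AV} brackets the concrete vote count $\votes(\vec{x},y_i)$ of any $\vec{x}\in\gamma(a)$. Step~2 combines this bracketing with the $\argmax$ characterisation of $M_{\ovo}$ to show that every class winning on $\vec{x}$ passes the membership test defining $M_{\ovo}^\sharp(a)$ in \eqref{eq-multiclass}.

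For Step~1 I would argue by a loop invariant on \textsf{AV} as it processes the $m(m-1)/2$ binary match-ups $\{i,j\}$. Fixing $\vec{x}\in\gamma(a)$ and writing $[\ell_i,u_i]$ for the current value of $\votes^\sharp(a,y_i)$, the invariant after processing a subset $S$ of match-ups is $\ell_i\le|\{\{i,j\}\in S\mid C_{\{i,j\}}(\vec{x})=y_i\}|\le u_i$ for all $i$. The base case (all intervals $[0,0]$, $S=\varnothing$) is immediate. For the inductive step, consider the match-up $\{i,j\}$ just processed: if $C^\sharp_{\{i,j\}}(a)=y_i$, soundness of the abstract binary classifier $C^\sharp_{\{i,j\}}$ gives $C_{\{i,j\}}(\vec{x})=y_i$, hence adding $[1,1]$ to $[\ell_i,u_i]$ and nothing to $[\ell_j,u_j]$ keeps the invariant exact for this match-up; the $y_j$ case is symmetric; and if $C^\sharp_{\{i,j\}}(a)=\qmt$ we have no information on $C_{\{i,j\}}(\vec{x})$, so adding $[0,1]$ to both entries is a correct lower/upper update whichever class the concrete match-up favours. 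Monotonicity (indeed exactness) of interval addition on $\Int_{\N}$ propagates the invariant; at termination $S$ is the full set of match-ups and $\votes(\vec{x},y_i)=|\{j\neq i\mid C_{\{i,j\}}(\vec{x})=y_i\}|$, so $v_i^{\min}\le\votes(\vec{x},y_i)\le v_i^{\max}$ for every $i$ and every $\vec{x}\in\gamma(a)$, which subsumes the two bounds displayed just before \eqref{eq-multiclass}.

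For Step~2, let $y_k\in M_{\ovo}(\vec{x})$, i.e.\ $k\in\argmax_{i\in\{1,\dots,m\}}\votes(\vec{x},y_i)$, so $\votes(\vec{x},y_k)\ge\votes(\vec{x},y_j)$ for all $j$. Then for every $j\neq k$, Step~1 gives $v_j^{\min}\le\votes(\vec{x},y_j)\le\votes(\vec{x},y_k)\le v_k^{\max}$, hence $v_j^{\min}\le v_k^{\max}$; by the definition in \eqref{eq-multiclass} this is exactly the condition for $y_k\in M_{\ovo}^\sharp(a)$. Since $y_k$ was an arbitrary member of $M_{\ovo}(\vec{x})$, we obtain $M_{\ovo}(\vec{x})\subseteq M_{\ovo}^\sharp(a)$, as required.

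I expect the only subtle point to be Step~1: the entry $\votes^\sharp(a,y_i)$ of a fixed class is updated across many distinct match-ups and in two different branches of \textsf{AV}, so the invariant must be phrased as an accumulated bound over the set of already-processed match-ups rather than pairwise, and one must check that in the $\qmt$-branch the lower bounds are left untouched (only the upper bounds grow). Everything else is routine accounting; in particular the argument never inspects the underlying numerical abstraction $A$ and uses only soundness of the $C^\sharp_{\{i,j\}}$ together with the elementary order behaviour of interval addition over nonnegative integers.
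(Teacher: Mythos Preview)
Your proposal is correct and follows essentially the same route as the paper: both establish the bracketing $v_i^{\min}\le\votes(\vec{x},y_i)\le v_i^{\max}$ from soundness of the $C^\sharp_{\{i,j\}}$ and then chain $v_j^{\min}\le\votes(\vec{x},y_j)\le\votes(\vec{x},y_k)\le v_k^{\max}$ for any winning class $y_k$. The only cosmetic difference is that you prove the bracketing via a loop invariant over the match-ups processed by \textsf{AV}, whereas the paper directly identifies the final endpoints as $v_k^{\min}=|\{i\neq k\mid C^\sharp_{\{i,k\}}(a)=y_k\}|$ and $v_k^{\max}=|\{i\neq k\mid C^\sharp_{\{i,k\}}(a)\in\{y_k,\top\}\}|$ and compares these cardinalities to $\votes(\vec{x},y_k)$ in one line.
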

\begin{proof}
Let $a\in A$, $\vec{x}\in \gamma(a)$, and let us prove that $M_{\ovo}(\vec{x}) \subseteq 
M_{\ovo}^\sharp(a)$. By soundness of all the binary classifiers 
$C^\sharp_{\{i,j\}}$, we have that for all $k\in [1,m]$,
if $\votes^\sharp(a,y_k) = [v^{\min}_k, v^{\max}_k]$ then
$v^{\min}_k = |\{i \in [1,m] \mid i\neq k, C^\sharp_{\{i,k\}}(a) =y_k\}|\leq 
|\{i \in [1,m] \mid i\neq k, C_{\{i,k\}}(\vec{x}) =y_k\}|$ 
and  $v^{\max}_k = |\{i \in [1,m] \mid i\neq k, C^\sharp_{\{i,k\}}(a) \in \{y_k,\top\}\}|
\geq |\{i \in [1,m] \mid i\neq k, C_{\{i,k\}}(\vec{x}) =y_k\}|$.  
Thus, for all $i\in [1,m]$, 
$v^{\min}_i \leq \votes(\vec{x},y_i) \leq v^{\max}_i$.
Consequently, 
if $y_i\in M_{\ovo}(\vec{x})$ then $i$ is a maximum
argument for $\votes(\vec{x},y_{(\cdot)})$, so that for all $j\neq i$,  
$\votes(\vec{x},y_j) \leq \votes(\vec{x},y_i)$. Hence, for all $j\neq i$, 
$v^{\min}_j \leq \votes(\vec{x},y_j) \leq \votes(\vec{x},y_i) \leq v^{\max}_i$, 
thus meaning that $y_i \in M_{\ovo}^\sharp(a)$. 
\qed 
\end{proof}

In our experimental evaluation we will follow the ovo approach for concrete multi-classification,
which is standard for SVMs~\cite{multiclass}, and consequently we will use this
abstract ovo multi-classifier in \eqref{eq-multiclass} 
for robustness verification.

\subsection{On the Completeness} \label{complete-sec}
Let $C:X \ra \{\minust, \plust\}$ be a binary classifier, $P:X\ra \wp(X)$ a perturbation 
and $C^\sharp:A\ra 
\{ \minust, \plust, \qmt\}$, $P^\sharp:X \ra A$ be a
corresponding sound abstract binary classifier and perturbation on an abstraction $A$. 
An abstract classifier can be used as a \emph{complete} robustness verifier when it is also able to prove unrobustness. 

\begin{definition}[Complete Abstract Classifiers and Robustness Verifiers]\label{def-complete}\rm
$C^\sharp$ is \emph{complete} for $C$ when for all $a\in A$, $C^\sharp(a)=\qmt \Ra 
\exists \vec{x},\vec{x}'\in \gamma(a).\, C(\vec{x})\neq C(\vec{x}')$. \\
$\tuple{C^\sharp,P^\sharp}$ is a (sound and) \emph{complete robustness verifier} for $C$ w.r.t.\ 
$P$ when
for all $\vec{x}\in X$, $C^\sharp(P^\sharp(\vec{x}))=C(\vec{x})$ iff $\Rob(C,\vec{x}, P)$.  
\qed
\end{definition}

Complete abstract classifiers can be obtained for linear binary
classifiers once these linear classifiers are in primal form and the abstract operations are exact.   
As discussed in Section~\ref{sec-arvf},  we therefore consider a linear binary classifier in primal form 
$C_{\prim} (\vec{x})\ud \sign_b(\sum_{j=1}^n \vec{w}_j \pi_j(\vec{x}))$ 
and an abstraction $A$ 
of $\wp(X)$ with $\gamma:A\ra \wp(X)$.  Let us consider the following exactness 
conditions for the abstract functions on $A$ needed for abstracting $C_{\prim}$ 
and the perturbation $P$: 
\begin{itemize}
\item[{\rm (E$_1$)}] Exact projection $\pi^\sharp_j$: 
For all $j\in [1,n]$ and $a\in A_n$, $\gamma(\pi^\sharp_j(a)) = \pi_j (\gamma(a))$;  
\item[{\rm (E$_2$)}] Exact scalar multiplication: For all $z\in \R$ and $a\in A_1$, 
$\gamma(z a)= z\gamma(a)$;  
\item[{\rm (E$_3$)}] Exact scalar addition $+^\sharp$: 
For all $a,a'\in A_1$, $\gamma(a+^\sharp a') = \gamma(a) + \gamma(a')$;  
\item[{\rm (E$_4$)}] Exact $\sign^\sharp_b$: For all  $b\in \R$, $a\in A_1$,  
$(\forall x\in \gamma(a). \sign_b(x)=s)$ $\Ra$ $\sign^\sharp_b(a)=s$;
\item[{\rm (E$_5$)}] Exact perturbation $P^\sharp$: For all  $\vec{x}\in X$, 
$\gamma(P^\sharp(\vec{x})) = P(\vec{x})$.
\end{itemize}
Then, it turns out that the abstract classifier 
\begin{equation}\label{complete-verif}
C_{\prim}^\sharp(a) \ud \sign^\sharp_b (\textstyle\sum_{j=1}^{\sharp n} \vec{w}_j \pi_j^\sharp(a))
\end{equation}
is complete and induces a complete robustness verifier.
\begin{lemma}\label{carv-lemma}
Under hypotheses {\rm (E$_1$)-(E$_5$)}, $C_{\prim}^\sharp$ in \eqref{complete-verif}
is (sound and) complete for $C_{\prim}$ 
and 
$\tuple{C_{\prim}^\sharp,P^\sharp}$ is a complete robustness verifier for $C_{\prim}$ w.r.t.\ $P$. 
\end{lemma}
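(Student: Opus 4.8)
The plan is to prove the two assertions of Lemma~\ref{carv-lemma} in turn: first that $C_{\prim}^\sharp$ is sound and complete for $C_{\prim}$, and then that $\tuple{C_{\prim}^\sharp,P^\sharp}$ is a sound and complete robustness verifier. Soundness of $C_{\prim}^\sharp$ is already a consequence of Lemma~\ref{srv-lemma}, since the exactness hypotheses (E$_1$)--(E$_4$) in particular imply the corresponding soundness hypotheses; so the new content is completeness. The key observation I would establish first is a precision lemma for the abstract scoring function: under (E$_1$)--(E$_3$), the abstract computation $\sum_{j=1}^{\sharp n} \vec{w}_j \pi_j^\sharp(a)$ is \emph{exact}, i.e.\ $\gamma\bigl(\sum_{j=1}^{\sharp n} \vec{w}_j \pi_j^\sharp(a)\bigr) = \{\sum_{j=1}^n \vec{w}_j \pi_j(\vec{x}) \mid \vec{x}\in\gamma(a)\}$. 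This follows by unwinding the definitions: (E$_1$) gives $\gamma(\pi_j^\sharp(a)) = \pi_j(\gamma(a))$, (E$_2$) then gives $\gamma(\vec{w}_j\pi_j^\sharp(a)) = \vec{w}_j\,\pi_j(\gamma(a))$, and (E$_3$) propagates exactness through the finite summation by induction on $n$. The crucial point here, emphasized in Section~\ref{alc-sec}, is that in primal form each input component $\pi_j(\vec{x})$ occurs exactly once, so no loss from multiple correlated occurrences arises and the collecting semantics of the linear map is captured exactly.

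Given this exactness of the abstract score, I would then derive completeness of $C_{\prim}^\sharp$. Fix $a\in A$ and suppose $C_{\prim}^\sharp(a) = \sign_b^\sharp\bigl(\sum_{j=1}^{\sharp n}\vec{w}_j\pi_j^\sharp(a)\bigr) = \qmt$. By the contrapositive of (E$_4$), this means it is \emph{not} the case that there is a single $s\in\{\minust,\plust\}$ with $\sign_b(y)=s$ for all $y\in\gamma\bigl(\sum_{j=1}^{\sharp n}\vec{w}_j\pi_j^\sharp(a)\bigr)$; hence there exist $y,y'$ in that concretization with $\sign_b(y)\neq\sign_b(y')$. By the exactness lemma above, $y = \sum_j\vec{w}_j\pi_j(\vec{x})$ and $y' = \sum_j\vec{w}_j\pi_j(\vec{x}')$ for some $\vec{x},\vec{x}'\in\gamma(a)$, and therefore $C_{\prim}(\vec{x}) = \sign_b(y) \neq \sign_b(y') = C_{\prim}(\vec{x}')$, which is exactly the completeness condition of Definition~\ref{def-complete}.

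For the robustness verifier claim, soundness of $\tuple{C_{\prim}^\sharp,P^\sharp}$ is immediate from Lemma~\ref{srv-lemma}. For the converse direction of the ``iff'', suppose $\Rob(C_{\prim},\vec{x},P)$ holds; I must show $C_{\prim}^\sharp(P^\sharp(\vec{x})) = C_{\prim}(\vec{x})$. By (E$_5$), $\gamma(P^\sharp(\vec{x})) = P(\vec{x})$, and robustness says $C_{\prim}$ is constant equal to $C_{\prim}(\vec{x})$ on $P(\vec{x})$. By the exactness lemma, $\gamma\bigl(\sum_{j}^{\sharp n}\vec{w}_j\pi_j^\sharp(P^\sharp(\vec{x}))\bigr)$ is precisely the set of scores $\{\sum_j\vec{w}_j\pi_j(\vec{x}') \mid \vec{x}'\in P(\vec{x})\}$, on all of which $\sign_b$ takes the constant value $C_{\prim}(\vec{x})$; hence by (E$_4$) we get $\sign_b^\sharp$ of that abstract value equals $C_{\prim}(\vec{x})$, i.e.\ $C_{\prim}^\sharp(P^\sharp(\vec{x})) = C_{\prim}(\vec{x})$. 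Combining with the already-established $C_{\prim}^\sharp(P^\sharp(\vec{x})) = C_{\prim}(\vec{x}) \Ra \Rob(C_{\prim},\vec{x},P)$ from Lemma~\ref{srv-lemma} finishes the proof.

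The only real obstacle is the exactness lemma for the abstract affine combination, and in particular making the inductive argument on the summation clean: (E$_3$) only gives exactness for binary $+^\sharp$, so I need to check that finite iterated sums stay exact, which is a routine induction using $\gamma(a+^\sharp a') = \gamma(a)+\gamma(a')$ together with the fact that Minkowski sum of sets of reals distributes over the indexed sum in the expected way. Everything else is bookkeeping with the definitions and the contrapositive of (E$_4$). I would state the exactness lemma explicitly (perhaps inline) before the main argument, since it is reused in both halves of the proof.
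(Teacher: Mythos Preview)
Your proposal is correct and follows essentially the same route as the paper: both arguments hinge on the exactness equality $\gamma\bigl(\sum_{j=1}^{\sharp n}\vec{w}_j\pi_j^\sharp(a)\bigr)=\{\sum_{j=1}^n\vec{w}_j\pi_j(\vec{x})\mid\vec{x}\in\gamma(a)\}$ obtained from (E$_1$)--(E$_3$), then use the contrapositive of (E$_4$) to extract witnesses $\vec{x},\vec{x}'\in\gamma(a)$ with opposite signs. The only minor difference is in the second claim: the paper factors the argument through the general statement ``if $C^\sharp$ is sound and complete and $P^\sharp$ is exact then $\tuple{C^\sharp,P^\sharp}$ is a complete verifier'' (so completeness of $C^\sharp$ gives $C^\sharp(P^\sharp(\vec{x}))\neq\qmt$, and soundness then pins down the value), whereas you re-invoke the exactness lemma and (E$_4$) directly; the paper's packaging is slightly more modular, yours is more concrete, but the content is the same.
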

\begin{proof}
Let $a\in A$  
and assume that 
$C_{\prim}^\sharp(a)=\sign^\sharp_b (\sum_{j=1}^{\sharp n} \vec{w}_j \pi_j^\sharp(a))=\qmt$. 
Then, by (E$_4$),  
we have that there exist $\alpha<0$ and $\beta\geq 0$ such that
$[\alpha,\beta]\subseteq \gamma(\sum_{i=1}^{\sharp n} \vec{w}_j \pi_j(a))$.
By (E$_1$), (E$_2$), (E$_3$), 
$\gamma(\sum_{i=1}^{\sharp n} \vec{w}_j \pi_j(a)) = 
\{\sum_{i=1}^{n} \vec{w}_j \pi_j(\vec{x}) \mid \vec{x} \in \gamma(a)\}$. Thus,
there exist $\vec{x},\vec{x}'\in \gamma(a)$ such that  
$\sum_{i=1}^{n} \vec{w}_j \pi_j(\vec{x})=\alpha$ and 
$\sum_{i=1}^{n} \vec{w}_j \pi_j(\vec{x}')$ $=$ $\beta$, implying that $C_{\prim}(\vec{x})=\minust$ and
$C_{\prim}(\vec{x}')=\plust$. Thus, $C_{\prim}^\sharp$ is complete.  

\noindent
Let us now prove in general that if $C^\sharp$ is sound and complete and $P^\sharp$ is exact then
$\tuple{C^\sharp,P^\sharp}$ is a complete robustness verifier.  
Let
$\vec{x}\in X$ and
assume that for all $\vec{x}'\in P(\vec{x})$, $C(\vec{x})=C(\vec{x}')$.  Then,
by (E$_5$), for all  $\vec{x}'\in \gamma(P^\sharp(\vec{x}))$, $C(\vec{x})=C(\vec{x}')$, 
so that, by completeness, $C^\sharp(P^\sharp(\vec{x})) \neq \qmt$. 
Since $P(\vec{x})=\gamma(P^\sharp(\vec{x})) \neq \varnothing$, by soundness of $C^\sharp$,
there is some $\vec{x}' \in P(\vec{x})$, such that $C^\sharp(P^\sharp(\vec{x}))=C(\vec{x}')=C(\vec{x})$.  
\qed
\end{proof}

Let us now focus on multi-classification. 
It turns out that completeness does not scale from binary to  multi-classification,  that is, 
even if all the abstract binary classifiers are assumed to be complete, the 
corresponding abstract multi-classification could lose the completeness. 
This loss is not due to the abstraction of the binary classifiers, but it is an intrinsic issue 
of a multi-classification approach based on binary classification. Let us show how this loss 
for ovr and ovo can happen through some examples. 

\begin{examplebf}\label{ex-ovr}\rm
Consider $L=\{y_1,y_2,y_3\}$ and assume that for some $\vec{x},\vec{x}'\in X$, the scoring ovr binary classifiers 
$C_{y_i,\overbar{y_i}}$
are as follows: 
$C_{y_1,\overbar{y_1}}(\vec{x}) = 3$, $C_{y_2,\overbar{y_2}}(\vec{x}) = -1$, $C_{y_3,\overbar{y_3}}(\vec{x}) = 2$,
$C_{y_1,\overbar{y_1}}(\vec{x}') = 1$,  $C_{y_2,\overbar{y_2}}(\vec{x}') = -1$, $C_{y_3,\overbar{y_3}}(\vec{x}) = 0.5$.
Hence,
$M_{\ovr}(\vec{x})=M_{\ovr}(\vec{x}')=\{y_1\}$, meaning that $M_{\ovr}$ is robust on $\vec{x}$ for a
perturbation $P(\vec{x})=\{\vec{x},\vec{x}'\}$. However, it turns out that the mere collecting abstraction
of binary classifiers $C_{y_i,\overbar{y_i}}$, although being trivially complete according to Definition~\ref{def-complete} 
may well lead to a (sound but) incomplete multi-classification. 
In fact, even if we consider no abstraction of sets of vectors/scalars and an abstract binary classifier is simply defined
by collecting abstraction
$C^\sharp_{y_i,\overbar{y_i}}(Y) \ud \{C_{y_i,\overbar{y_i}}(\vec{x}) \in \R \mid \vec{x}\in Y\}$, then we have that
while each $C^\sharp_{y_i,\overbar{y_i}}$ is complete the corresponding abstract ovr multi-classifier turns out to be 
sound but not complete. 
In our example, we have that: 
$C^\sharp_{y_1,\overbar{y_1}}(P(\vec{x})) = \{1,3\}$,  $C_{y_2,\overbar{y_2}}(P(\vec{x})) = \{-1,-1\}$, $C_{y_3,\overbar{y_3}}(P(\vec{x})) = \{0.5,2\}$.
Hence, the ovr strategy can only derive that both $y_1$ and $y_2$ are feasible classes
for $P(\vec{x})$, namely, $M^\sharp_{\ovr}(\{\vec{x},\vec{x}'\})=\{y_1,y_2\}$, meaning that  $M^\sharp_{\ovr}$
cannot prove the robustness of $M$. 
\qed
\end{examplebf}

The above example shows that the loss of relational information between input vectors and corresponding scores is an unavoidable 
source of incompleteness when abstracting ovr multi-classification. 
An analogous incompleteness happens in ovo multi-classification. 
\begin{examplebf}\rm
Consider $L=\{y_1,y_2,y_3,y_4,y_5\}$ and assume that for some $\vec{x},\vec{x}'\in X$, the ovo binary classifiers $C_{\{i,j\}}$
give the following outputs: 
\[
\begin{array}{|c|c|c|c|c|c|c|c|c|c|c|}
\hline & C_{\{1,2\}} & C_{\{1,3\}} & C_{\{1,4\}} & C_{\{1,5\}} & C_{\{2,3\}} & C_{\{2,4\}} & C_{\{2,5\}} & C_{\{3,4\}} & C_{\{3,5\}} & C_{\{4,5\}} \\[1pt]
\hline
\vec{x} & y_1&y_1 &y_1 &y_5 &y_2 &y_2 &y_5 &y_3 &y_3 &y_4 \\
\vec{x'}& y_1&y_1 &y_4 &y_1 &y_2 &y_4 &y_2 &y_3 &y_5 &y_5 \\
\hline
\end{array}
\]
%
so that $M_{\ovo}(\vec{x})=M_{\ovo}(\vec{x}')=\{y_1\}$, meaning that $M_{\ovo}$ is robust on $\vec{x}$
for the perturbation $P(\vec{x})=\{\vec{x},\vec{x}'\}$. Similarly to Example~\ref{ex-ovr}, the collecting abstractions
of binary classifiers $C_{y_i,y_j}$ are trivially complete but define a (sound but) incomplete multi-classification. 
In fact, even with no numerical abstraction, if we consider the abstract collecting binary classifiers 
$C^\sharp_{\{y_i,y_j\}}(Y) \ud \{C_{\{y_i,y_j\}}(\vec{x}) \mid \vec{x}\in Y\}$ then we have that:
$$
\begin{array}{|c|c|c|c|c|c|c|c|c|c|c|}
\hline & C_{\{1,2\}} & C_{\{1,3\}} & C_{\{1,4\}} & C_{\{1,5\}} & C_{\{2,3\}} & C_{\{2,4\}} & C_{\{2,5\}} & C_{\{3,4\}} & C_{\{3,5\}} & C_{\{4,5\}} \\[1pt]
\hline
\vspace*{1pt}
P(\vec{x}) & \{y_1\} &\{y_1\} &\{y_1,y_4\} &\{y_1,y_5\} &\{y_2\} &\{y_2,y_4\} &\{y_2,y_5\} &\{y_3\} &\{y_3,y_5\} &\{y_4,y_5\} \\
\hline
\end{array}
$$
%
Thus, the ovo voting for $P(\vec{x})$ in order to be sound necessarily has to assign $4$ votes to both classes $y_1$ and $y_5$,
meaning that $M^\sharp_{\ovo}(P(\vec{x}))=\{y_1,y_5\}$. As a consequence, $M^\sharp_{\ovr}$
cannot prove the robustness of $M_{\ovo}$. Here again, this is a consequence of the collecting abstraction which 
looses the relational information between input vectors and corresponding classes, and
therefore is an ineluctable source of incompleteness when abstracting ovo multi-classification. 
\qed
\end{examplebf}

Let us observe that when all the abstract binary classifiers $\ok{C^\sharp_{\{i,j\}}}$ are complete, then in
the abstract voting procedure \textsf{AV} in \eqref{def-av}, for all
$\votes^\sharp(a,y_i) = [v^{\min}_i,v^{\max}_i]$, we have that
$|\{j\neq i \mid \exists \vec{x}\in \gamma(a). C_{\{i,j\}}(\vec{x})=i\}| = v^{\max}_i$ holds, meaning that 
the hypothesis of completeness of abstract binary classifiers strengthens 
the upper bound $v^{\max}_i$ to a precise equality, although this is not enough for preserving the completeness. 

\section{Numerical Abstractions for Classifiers}\label{sec-numdom}

\subsection{Interval Abstraction} \label{int-subsec}
The $n$-dimensional 
interval abstraction  domain $\Int_n$ is simply defined as a nonrelational product of $\Int$, i.e., $\Int_n \ud \Int^n$
(with $\Int_1 = \Int$), where $\gamma_{\Int_n}: \Int_n \ra \wp(\R^n)$ is defined by $\gamma_{\Int_n}(I_1,...,I_n) \ud \times_{i=1}^n \gamma_{\Int}(I_i)$, and, by a slight abuse of notation, this concretization map will be denoted simply by $\gamma$.  
In order to abstract linear and nonlinear classifiers, we will use the following standard interval operations based on
real arithmetic operations.
\begin{itemize}
\item Projection $\pi_j:\Int_n \ra \Int$ defined by $\pi_j(I_1,...,I_n) \ud I_j$, which is trivially exact 
because $\Int_n$ is nonrelational.  
\item Scalar multiplication $\lambda \vec{I}.z \vec{I}: \Int_n \ra \Int_n$, with $z\in \R$, is defined as
componentwise extension of scalar
multiplication  $\lambda I.z I: \Int_1 \ra \Int_1$ given by: $z\bot = \bot$ and
$z[l,u] \ud [z l,zu]$,  where $z (\pm\infty)=\pm\infty$ for $z\neq 0$ and $0(\pm\infty) = 0$. This is 
an exact abstract scalar multiplication, i.e., $\{z\vec{x} \mid \vec{x}\in \gamma(\vec{I})\} = \gamma(z\vec{I})$ holds. 
\item Addition $+^\sharp: \Int_n \times \Int_n \ra \Int_n$ is defined as componentwise extension of standard interval addition, that is, 
$\bot +^\sharp I = \bot = I +^\sharp \bot$, $[l_1,u_1] +^\sharp [l_2,u_2] = [l_1+l_2,u_1 + u_2]$. 
This abstract interval addition  is exact, i.e., $\{\vec{x}_1 + \vec{x}_2 \mid \vec{x}_i\in \gamma(\vec{I}_i)\} = \gamma(\vec{I}_1 +^\sharp
\vec{I}_2)$ holds.  
\item One-dimensional 
multiplication $*^\sharp: \Int_1 \times \Int_1 \ra \Int_1$ is enough for our purposes, whose definition is standard:   
$\bot *^\sharp I = \bot = I *^\sharp \bot$, $[l_1,u_1] *^\sharp [l_2,u_2] = [\min(l_1 l_2, l_1 u_2, u_1 l_2,u_1 u_2),$ 
$\max(l_1 l_2, l_1 u_2, u_1 l_2,u_1 u_2)]$. 
As a consequence of the completeness of real numbers, 
this abstract interval multiplication is exact, i.e., $\{x_1 x_2 \mid x_i \in \gamma(I_i) \} 
= \gamma(I_1 *^\sharp I_2)$.  

\end{itemize}

It is worth remarking that since all these abstract functions on real intervals are exact and real intervals have the abstraction map, 
it turns out that all these abstract functions are the best correct approximations on intervals of the corresponding concrete functions. 

For the exponential function $e^{x}:\R \ra \R$ used by RBF kernels, let us consider a generic real function $f:\R \ra \R$ which
is assumed to be continuous and monotonically either 
increasing ($x\leq y\Ra f(x)\leq f(y)$) or decreasing ($x\leq y\Ra f(x)\geq f(y)$).
Its collecting
lifting $f^c:\wp(\R) \ra \wp(\R)$ is approximated on the interval abstraction by the abstract function
$f^\sharp: \Int_1 \ra \Int_1$
defined as follows: for all possibly unbounded intervals $[l,u]$ with $l,u \in \R\cup \{\text{--}\infty,\text{+}\infty\}$, 
\begin{align*}
&f_i([l,u]) \ud \inf \{f(x) \in \R \mid x\in \gamma([l,u])\}\in \R \cup \{\text{--}\infty\}\\
&f_s([l,u]) \ud \sup \{f(x) \in \R \mid x\in \gamma([l,u])\}\in \R\cup \{\text{+}\infty\}\\
&f^\sharp([l,u]) \ud [\min(f_i([l,u]), f_s([l,u])), \max(f_i([l,u]), f_s([l,u])] \qquad f^\sharp(\bot)\ud \bot
\end{align*} 
Thus, for bounded intervals $[l,u]$ with $l,u\in \R$, 
$f^\sharp([l,u]) = [\min(f(l),f(u)),$ $\max(f(l),f(u))]$.
As a consequence of the hypotheses of continuity and monotonicity of $f$, it
turns out that this abstract function $f^\sharp$ is exact, i.e.,  
$\{f(x)\in \R \mid x\in \gamma([l,u])\}=\gamma(f^\sharp([l,u]))$ holds, and
it is the best correct approximation on intervals of $f^c$.

\subsection{Reduced Affine Arithmetic Abstraction} \label{raf-subsec}
Even if all the abstract functions of the interval abstraction are exact, it is well known that 
the compositional abstract evaluation of an inductively defined expression $e$ on $\Int$
can be imprecise due to the dependency problem, meaning that if the syntactic expression $e$ includes multiple occurrences
of a variable $x$ and the abstract evaluation of $e$ is performed by structural induction on $e$, 
then each occurrence of $x$ in $e$ is taken independently from the others and this can lead to a significant loss of precision in the
output interval. This loss of precision may happen both for addition and multiplication of intervals. 
For example, the abstract compositional evaluations of the simple expressions $x-x$ and $x*x$ 
on an input interval $[-c,c]$, with $c\in \R_{>0}$, yield, respectively, $[-2c,2c]$ and $[-c^2,c^2]$, rather than
the exact results $[0,0]$ and $[0,c^2]$. This dependency problem can be a significant source of imprecision 
for the interval abstraction of a polynomial SVM classifier 
$C(\vec{x}) = \sign ( \textstyle [\sum_{i = 1}^N \alpha_i y_i (\sum_{j=1}^n (\vec{y_i})_j \vec{x}_j +c)^d] -b)$, 
where each attribute $\vec{x}_j$ of an input vector $\vec{x}$ occurs for each support vector $\vec{y_i}$. 
The classifiers based on RBF kernel show an analogous issue. 

\paragraph{\textbf{Affine Forms.}} 
Affine arithmetic \cite{sf97,sf2004} mitigates this dependency problem of the nonrelational interval abstraction. 
An interval $[l,u]\in \Int$ which approximates the range of some variable $x$ is 
represented by an \emph{affine form} (AF) $\hat{x} = a_0 + a_1 \epsilon_x$, where
$a_0 = (l+u)/2$, $a_1 = (u-l)/2$ and $\epsilon_x$ is a symbolic (or ``noise'') real variable ranging in $[-1,1]\in \Int$ which 
explicitly represents a dependence from the parameter $x$. This solves the dependency problem for a linear expression such as $x-x$
because the interval $[-c,c]$ for $x$ is represented 
by $0+c\epsilon_x$ so that the compositional evaluation of $x-x$ for $0+c\epsilon_x$ becomes $(0+c\epsilon_x) - (0+c\epsilon_x) = 0$, 
while for nonlinear expressions such as $x*x$, an approximation is still needed. 

In general, the domain $\AF_{k}$ of affine forms 
with  $k\geq 1$ noise variables consists of affine forms $\hat{a}=a_0 + \sum_{i=1}^k a_i\epsilon_i$, where
$a_i\in \R$ and each $\epsilon_i$ represents either an external dependence from some input variable or an internal approximation 
dependence due to a nonlinear operation. An affine form $\hat{a}\in \AF_{k}$ can be abstracted to
a real interval in $\Int$,  
as given by a map $\alpha_{\Int}:
\AF_{k}\ra \Int$ defined as follows: for all $\hat{a}=a_0 + \sum_{i=1}^k a_i\epsilon_i \in \AF_{k}$,
$\alpha_{\Int}(\hat{a}) \ud [c_{\hat{a}} - \rad(\hat{a}), c_{\hat{a}} + \rad(\hat{a})]$, where
$c_{\hat{a}}\ud a_0$ and 
$\rad(\hat{e}) \ud 
\sum_{i=1}^k |a_i|$ are called, resp., center and radius of the affine form $\hat{a}$. This, in turn, defines
the interval concretization $\gamma_{\AF_k}: \AF_k \ra \R$ given by
$\gamma_{\AF_k}(\hat{a}) \ud \gamma_{\Int\shortrightarrow \R}(\alpha_{\Int}(\hat{a}))$
$=\{x\in \R \mid c_{\hat{a}} - \rad(\hat{a}) \leq x \leq c_{\hat{a}} + \rad(\hat{a})\}$.

Vectors of affine forms may be used to represent zonotopes, which are center-symmetric convex polytopes and have been
used to design an abstract domain for static program analysis 
\cite{goubault2015} endowed with abstract functions, joins and widening. 

\paragraph{\textbf{Reduced Affine Forms.}} 
It turns out that affine forms are exact for linear operations, namely additions and scalar multiplications.
If $\hat{a},\hat{b}\in \AF_k$, 
where $\hat{a}=a_{0} + \sum_{j=1}^k a_j \epsilon_j$ and $\hat{b}=b_{0} + \sum_{j=1}^k b_j \epsilon_j$, 
and $c\in \R$ then abstract additions and scalar multiplications are defined as follows: 
\begin{align*}
\hat{a} +^\sharp \hat{b} \ud (a_0 + b_0) + \textstyle\sum_{j=1}^k  (a_j + b_j)\epsilon_j
\qquad\quad
c\hat{a} \ud c a_{0} + \textstyle\sum_{j=1}^k c a_j \epsilon_j.
\end{align*} 
These abstract operations are exact, namely,
$\{x + y\in \R \mid x\in \gammafk(\hat{a}), y\in 
\gammafk(\hat{b})\} = \gammafk(\hat{a} +^\sharp \hat{b})$ and $c\gammafk(\hat{a}) = \gammafk(c\hat{a})$.

For nonlinear operations, in particular multiplication, in general the result cannot be represented exactly by an affine form. 
Then, the standard strategy for defining the multiplication of affine forms is to approximate
the precise result by adding a fresh noise symbol  whose coefficient is typically computed by 
a Taylor or Chebyshev approximation of the nonlinear part of the multiplication  (cf.\ \cite[Section~2.1.5]{goubault2015}).  
Similarly, for the exponential function used in RBF kernels, an algorithm for computing an affine approximation of 
the exponential $e^x$ evaluated on 
an affine form $\hat{x}$ for the exponent $x$ is given in \cite[Section~3.11]{sf97} and is based on an optimal Chebyshev approximation 
(that is, w.r.t.\ $L_\infty$ distance) 
of the exponential which introduces a fresh noise symbol.  
However, the need of injecting a fresh noise symbol for each nonlinear operation 
raises a critical space and time  complexity issue for 
abstracting polynomial and RBF classifiers, because this would imply that a new but useless noise symbol should 
be added for each support vector.  For example, for a 2-polynomial classifier, we 
need to approximate a square operation $x*x$ for each of the $N$ support vectors,  
and a blind usage of abstract multiplication for affine forms 
would add $N$ different and useless noise symbols. This drawback would be even worse
for for $d$-polynomial classifers with $d>2$, while
an analogous critical issue would happen for RBF classifiers. 
This motivates the use of so-called \emph{reduced affine forms} (RAFs), which have been introduced in \cite{messine}
as a remedy for the increase of noise symbols due to nonlinear operations and still allow us to 
keep 
track of correlations between the components of the input vectors of classifiers. 
\\
\indent
A reduced affine form $\tilde{a}\in \RAF_k$ of length $k \geq 1$ is defined as a sum of a standard affine form 
in $\AF_k$ with 
a specific rounding noise $\epsilon_a$ which accumulates 
all the errors introduced by nonlinear operations. Thus, 
\begin{align*}
\RAF_k  \ud \{ 
a_0 + \textstyle\sum_{j=1}^k a_j \epsilon_j + a_r \epsilon_a \mid a_0,a_1,...,a_k \in \R,\, a_r\in \R_{\geq 0}\}.
\end{align*} 
The key point is that the length of $\tilde{a}\in \RAF_k$ remains unchanged during the whole abstract computation and 
$a_r \in \R_{\geq 0}$ is the radius of the accumulative error of approximating all nonlinear operations during
abstract computations. Of course, each $\tilde{a}\in \RAF_k$ can be viewed as a standard affine form in
$\AF_{k+1}$ and this allows us to define the interval concretization $\gamma_{\RAF_k}(\tilde{a})$
and the linear abstract operations
of addition and scalar multiplication of RAFs simply by considering them as standard affine forms. 
In particular, linear abstract operations in $\RAF_k$ are exact w.r.t.\ interval concretization $\gamma_{\RAF_k}$. 
\\
\indent
Nonlinear abstract operations, such as multiplication, must necessarily be approximated for RAFs. Several algorithms of
abstract multiplication of RAFs are available, which differ in precision, approximation principle and time complexity, 
ranging from linear to quadratic complexities \cite[Section~3]{skalna2017}. 
Given $\tilde{a},\tilde{b}\in 
\RAF_k$, we need to define an abstract multiplication 
$\tilde{a}*^\sharp \tilde{b}\in \RAF_k$ which is sound, namely,  
$\{xy \in \R \mid x\in \gamma_{\RAF_k}(\tilde{a}),\, y\in \gamma_{\RAF_k}(\tilde{b})\}$ $\subseteq$ $\gamma_{\RAF_k}(\tilde{a}*^\sharp \tilde{b})$, where it is worth pointing out that this soundness condition is given 
w.r.t.\ interval concretization $\gamma_{\RAF_k}$
and scalar multiplication. 
Time complexity is a crucial issue for using $*^\sharp$ in abstract polynomial and RBF kernels, because 
in these abstract classifiers at least an abstract multiplication must be used for each support vector, so that 
quadratic time algorithms in $O(k^2)$ cannot scale when the number of support vectors grows, as expected 
for realistic training datasets.
We therefore selected a recent linear time algorithm by Skalna and Hlad{\'{\i}}k \cite{skalna2017} which is 
optimal in the following sense.
Given $\tilde{a},\tilde{b}\in 
\RAF_k$, we have that their concrete symbolic multiplication is as follows:
\begin{align*}
\tilde{a} * \tilde{b} = &\; (a_0 + \textstyle\sum_{j=1}^k a_j \epsilon_j + a_r \epsilon_a) * 
(b_0 + \textstyle\sum_{j=1}^k b_j \epsilon_j + b_r \epsilon_b) \nonumber\\
= &\; a_0b_0 + \textstyle\sum_{j=1}^k(a_0b_j + b_0a_j)\epsilon_j + (a_0b_r\epsilon_b + b_0 a_r \epsilon_a) \: + f_{\tilde{a},\tilde{b}}(\epsilon_1,...,\epsilon_k,\epsilon_a,\epsilon_b)
\end{align*}
where $f_{\tilde{a},\tilde{b}}(\epsilon_1,...,\epsilon_k,\epsilon_a,\epsilon_b) \ud 
(\textstyle\sum_{j=1}^k a_j \epsilon_j + a_r\epsilon_a)(\textstyle\sum_{j=1}^k b_j \epsilon_j + b_r\epsilon_b)$.
An abstract multiplication $*_e^\sharp$ on $\RAF_k$ can be defined as follows:
if $R_{\max},R_{\min} \in \R$ are, resp., the minimum and maximum of 
$\{f_{\tilde{a},\tilde{b}}(\vec{e}) \in \R \mid \vec{e}\in [-1,1]^{k+2}\}$
then
\begin{align*}
\tilde{a} *_e^\sharp \tilde{b} \ud &\; a_0b_0 + 0.5(R_{\max}+R_{\min}) +
\textstyle\sum_{j=1}^k(a_0b_j + b_0a_j)\epsilon_j \: + 
(|a_0| b_r + |b_0|a_r + 0.5(R_{\max}-R_{\min})) \epsilon_{a*b}  
\end{align*}
where $0.5(R_{\max}+R_{\min})$ and $0.5(R_{\max}-R_{\min})$ are, resp., the center and the radius of 
the interval range of $f_{\tilde{a},\tilde{b}}(\epsilon_1,...,\epsilon_k,\epsilon_a,\epsilon_b)$. 
As argued in \cite[Proposition~3]{skalna2017}, 
this defines an optimal abstract multiplication of RAFs. 
Skalna and Hlad{\'{\i}}k \cite{skalna2017} put forward two algorithms for computing 
$R_{\max}$ and $R_{\min}$, one with $O(k)$ time bound 
and one in $O(k \log k)$: the $O(k)$ bound is obtained by relying on a linear time algorithm to find a median of a sequence of real
numbers, 
while the $O(k \log k)$ algorithm is based on (quick)sorting that sequence of numbers. The details of these algorithms are here omitted and
can be found in \cite[Section~4]{skalna2017}. 
In abstract interpretation terms, it turns out that this abstract multiplication algorithm $*_e^\sharp$ of RAFs provides 
the best approximation among the RAFs which correctly approximate the multiplication with the same coefficients for 
$\epsilon_1$,...,$\epsilon_k$ of $\tilde{a} *_e^\sharp \tilde{b}$:

Finally, let us consider the exponential function $e^x$ used in RBF kernels. The 
algorithm in \cite[Section~3.11]{sf97} for computing the affine form approximation of $e^x$ 
and based on Chebyshev approximation of $e^x$
can be also applied when the exponent is represented by a RAF 
$\tilde{x} = x_0 + \textstyle\sum_{j=1}^k x_j \epsilon_j + x_r \epsilon_x\in \RAF_k$, provided that the
radius of the fresh noise symbol produced by computing $e^{\tilde{x}}$ is added to the coefficient of the rounding noise 
$\epsilon_x$ of $\tilde{x}$.

\subsection{Floating Point Soundness}\label{sec-fpsound}
The interval abstraction, the reduced affine form 
abstraction and the corresponding 
abstract functions described in Section~\ref{sec-numdom} rely on precise real arithmetic on $\R$, in particular soundness and exactness
of abstract functions depend on real arithmetic. It turns out that these abstract functions may yield unsound results for 
floating point arithmetic such as the standard IEEE 754~\cite{ieee754}. This issue has been investigated in static program analysis 
by Min\'e \cite{mine04}
who showed how to adapt the interval abstraction and, more in general, a numerical abstract domain 
to IEEE 754-compliant floating point numbers in order to design ``floating point sound'' abstract functions. 
Let $\tuple{\F,\leq}$ denote the poset $\F$ of floating point numbers, where we refer to
the technical standard IEEE 754~\cite{ieee754}. 
The basic step 
is the rounding of a scalar $c\in \R$ towards $\text{--}\infty$, denoted by $c_{-}$, and towards $\text{+}\infty$, denoted by $c_{+}$, where
$c_{-} \!\in \F$ ($c_{+}\!\in \F$) is the greatest (least) floating-point number in $\tuple{\F,\leq}$ smaller (greater) than or equal to $c$.  
This floating-point rounding  allows to define  abstract operations on intervals of 
numbers in $\F$ which are floating-point sound \cite[Section~4]{mine04}. For example, if $[l_1,u_1], [l_2,u_2]\in \Int_{\F}$ are two intervals of floating-point numbers
then their floating point addition is defined by $[l_1,u_1] \ok{+^\sharp_f} [l_2,u_2] \ud [(l_1 + u_1)_{-},(l_2 + u_2)_{+}]$ and this operation is
floating-point sound because $\gamma([l_1,u_1] \ok{+^\sharp} [l_2,u_2]) \subseteq \gamma([l_1,u_1] \ok{+^\sharp_f} [l_2,u_2])$.  
A number of software packages for sound floating-point arithmetic are available, such as the GNU MPFR C library~\cite{mpfr} used
in the well-known library of numerical abstract domains Apron \cite{apron}. For efficiency reasons,  our prototype implementation 
simply uses the standard C  \textit{fesetround()} function for floating-point rounding \cite[Section~7.6]{C99} 
in all the definitions of our
floating-point sound abstract functions, both for intervals and RAFs.

\section{Verifying SVM Classifiers} \label{sec-vsvmc}

\subsection{Perturbations} \label{sec-pert}
We consider robustness of SVM classifiers against a standard adversarial region defined by the $L_\infty$ norm, 
as considered in Carlini and Wagner's robustness model \cite{carlini} and used by Vechev et al.~\cite{vechev-sp18,vechev-nips18,singh2019} 
in their verification framework. Given a generic 
classifier $C: X\ra L$ and a constant $\delta\in \R_{>0}$,  a $L_\infty$ $\delta$-perturbation
of an input vector $\vec{x}\in \R^n$ is defined by $P^\infty_\delta (\vec{x}) \ud \{\vec{x}'\in X \mid \norm{\vec{x}'-\vec{x}}_\infty \leq \delta\}$. 
Thus, if the space $X$ consists of $n$-dimensional real vectors normalized in $[0,1]$ (our datasets follow this standard)
 and $\delta \in (0,1]$ then 
\begin{align*}
P^\infty_\delta (\vec{x})=\{\vec{x}'\in \R^n \mid \forall i.\: \vec{x}'_i \in [\vec{x}_i-\epsilon,\vec{x}_i+\epsilon]\cap [0,1]\}.
\end{align*}  
Let us observe that, for all $\vec{x}$, $P^\infty_\delta (\vec{x})$ is an exact perturbation for intervals and therefore for RAFs as well (cf.\ (E$_5$)). The datasets of our experiments consist of $h\!\times\! w$ grayscale images where each image is represented as a real vector 
in $[0,1]^{hw}$ whose components encode the light values of pixels. Thus, increasing (decreasing) the value of a vector 
component means brightening (darkening) that pixel, so that a perturbation $P^\infty_\delta (\vec{x})$ 
of an image $\vec{x}$ represents all the images where every possible subset of pixels is brightened or darkened up to $\delta$.

We also consider robustness of image classifiers for the so-called adversarial framing on the border of images,
which has been recently shown to represent an effective attack for deep convolutional networks \cite{zajac}. 
Consider an image represented as a $h\times w$ matrix $M\in \R_{h,w}$ with normalized real values in $[0,1]$. Given an integer framing thickness
$t\in [1,\min(h,w)/2]$, the ``occlude'' $t$-framing perturbation of $M$ is defined by 
\begin{align*}
P^{\frm}_t (M) \ud \{M' \in 
\R_{h,w} \mid\;\: &\forall i\in [t+1,h-t], j\in [w+1,w-t].\, M'_{i,j}=M_{i,j}, \\ 
&\forall i\not\in [t+1,h-t], j\not\in [w+1,w-t].\, M'_{i,j}\in [0,1]\}.
\end{align*}
This framing perturbation models the uniformly distributed random noise attack in~\cite{zajac}.
Also in this case $P^{\frm}_t(M)$ is a perturbation which can be exactly represented by 
intervals and consequently by RAFs. 
An example of image taken from the Fashion-MNIST dataset together with 
two brightnening and darkening perturbations and a frame perturbation of thickness 2
is given in the figure below. 

\begin{center}
\begin{tabular}{cccc}
\includegraphics[scale=1.9]{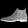}
&
\includegraphics[scale=1.9]{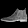}
&
\includegraphics[scale=1.9]{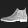}
&
\includegraphics[scale=1.9]{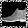}
\\[10pt]
{\footnotesize
{\; (1) Original image $\vec{m}$}}
&
{\footnotesize
{\; (2) Darkening in $P^\infty_{0.25}(\vec{m})$}}
&
{\footnotesize
{\; (3) Brightening in $P^\infty_{0.25}(\vec{m})$}}
&  
{\footnotesize
{\; (4) Framing in $P^{\frm}_2(\vec{m})$}}
\end{tabular}
\end{center}

\subsection{Linear Classifiers}
As observed in Section~\ref{sec-numdom}, for the interval abstraction 
it turns out that all the abstract functions
which are used in abstract linear binary classifiers in primal form (cf.\ Section~\ref{sec-arvf}) are exact, so that,
by Lemma~\ref{carv-lemma},
these abstract linear binary classifiers are complete. This completeness implies that there is no need to resort
to the RAF abstraction for linear binary classifiers. However, as argued in Section~\ref{sec-arvf}, 
this completeness for binary classifiers does not scale to multi-classification. 
Nevertheless, it is worth pointing out that for each binary classifier $C_{\{i,j\}}$ used in ovo multi-classification, 
since $L_\infty$ and frame perturbations are exact for intervals, we have a complete robustness verifier for each $C_{\{i,j\}}$. 
As a consequence, this makes feasible to find adversarial examples of linear binary classifiers as follows. 
Let us consider a linear binary classifier in primal form
$C(\vec{x}) = \sign ([\sum_{j = 1}^n \vec{w}_j \vec{x}_j] -b)$   
and a perturbation $P$ which is exact on intervals, i.e., for all $\vec{x}$, $P(\vec{x})=\gamma_{\Int_n}(P^\sharp(\vec{x}))$, where $P^\sharp(\vec{x}) = \tuple{[l_1,u_1],...,[l_n,u_n]})\in \Int_n$.   
Completeness of robustness linear verification means that if the interval abstraction 
$\textstyle\sum_{j=1}^{\sharp n} \vec{w}_j [l_j,u_j]$
outputs an interval $[l,u]\in \Int_1$ such that $0\in [l,u]$, then $C$ is  surely not robust on $\vec{x}$ for  $P$. 
It is then easy to find two input vectors $\vec{y},\vec{z} \in P (\vec{x})$ which provide 
a concrete counterexample to the robustness, namely such that 
$C(\vec{y})\neq C(\vec{z})$. 
For all $i\in [1,n]$, if we define: 
\begin{align*}
\vec{y}_i \ud \textbf{if}~\sign(\vec{w}_i)\geq 0~\textbf{then}~u_i~\textbf{else}~l_i 
\qquad\qquad
\vec{z}_i \ud  \textbf{if}~\sign(\vec{w}_i)\geq 0~\textbf{then}~l_i~\textbf{else}~u_i
\end{align*}
then we have found $\vec{y},\vec{z} \in P (\vec{x})$ such that 
$\sum_{j = 1}^n \vec{w}_j \vec{y}_j=u$ and $\sum_{j = 1}^n \vec{w}_j \vec{z}_j=l$,
so that 
$C(\vec{y})=\plust$ and $C(\vec{z})=\minust$. The pair of input vectors $(\vec{y},\vec{z})$ therefore represents 
the strongest adversarial example to the robustness of $C$ on $\vec{x}$. 

\subsection{Nonlinear Classifiers}
Let us first point out that interval and RAF abstractions are incomparable for nonlinear operations. 
\begin{examplebf}\label{ex-nonlinear}
\rm
Consider the $2$-polynomial in two variables $f(x_1,x_2)=(1+2x_1-x_2)^2 -\frac{1}{4}(2+x_1 +x_2)^2$,
which could be thought of as a 2-polynomial classifier in $\R^2$. Assume that $x_1$ and $x_2$ range in the interval $[-1,1]$. 
The abstract evaluation of $f$ on the intervals 
$I_{x_1}=[-1,1]=I_{x_2}$ is as follows:
\begin{align*}
f^\sharp_{\Int}(I_{x_1},I_{x_2}) &= (1+2[-1,1] -[-1,1])^2 -\textstyle\frac{1}{4} (2+[-1,1]+[-1,1])^2 \\
&= [-2,4]^2 -\textstyle\frac{1}{4} [0,4]^2 = [0,16]+[-4,0]=[-4,16]
\end{align*}
On the other hand, for the $\RAF_2$ abstraction we have that $\tilde{x}_1 = \epsilon_1$ and $\tilde{x}_2 = \epsilon_2$ and
the abstract evaluation of $f$ is as follows:
\begin{align*}
f^\sharp_{\RAF_2}(\tilde{x}_1, \tilde{x}_2) = &\; 
(1+2\epsilon_1 - \epsilon_2)^2 -\textstyle\frac{1}{4} (2+\epsilon_1 +\epsilon_2)^2 \\
= &\; [1 + 0.5(R_{1\max} + R_{1\min}) + 4\epsilon_1 -2\epsilon_2 + 0.5(R_{1\max} - R_{1\min})\epsilon_r] \, -\\
&\; \textstyle\frac{1}{4}[4 + 0.5(R_{2\max} + R_{2\min}) + 4\epsilon_1 +4\epsilon_2 + 0.5(R_{2\max} - R_{2\min})\epsilon_r]\\
{\normalsize \text{where}}\;\; &\; R_{1\max} =\max((2\epsilon_1-\epsilon_2)^2) = 9,\; R_{1\min} =\min((2\epsilon_1-\epsilon_2)^2)=0,\\
&\; R_{2\max} =\max((\epsilon_1+\epsilon_2)^2) = 4,\; R_{2\min} =\min((\epsilon_1+\epsilon_2)^2)=0\\
=&\; [5.5 +4\epsilon_1 -2\epsilon_2 + 4.5\epsilon_r] -\textstyle\frac{1}{4} [6 +4\epsilon_1 +4\epsilon_2 + 2\epsilon_r] \\
=&\; [5.5 +4\epsilon_1 -2\epsilon_2 + 4.5\epsilon_r] + [-1.5 -\epsilon_1 -\epsilon_2 -0.5\epsilon_r] \\
=&\; 4 +3\epsilon_1 -3\epsilon_2 +4 \epsilon_r
\end{align*}
Thus, it turns out that $\gamma_{\RAF_2}(f^\sharp_{\RAF_2}(\tilde{x}_1, \tilde{x}_2)) = [4-10,4+10]=[-6,14]$, which is 
incomparable
with $\gamma_{\Int}(f^\sharp_{\Int}(I_{x_1},I_{x_2}))=[-4,16]$. 
\qed
\end{examplebf}

By taking into account Example~\ref{ex-nonlinear}, for a nonlinear binary classifier 
$C(\vec{x}) = \sign(D(\vec{x}) - b)$, with $D(\vec{x})= \sum_{i = 1}^N \alpha_i y_i k(\vec{x}_i, \vec{x})$, 
we will use both the interval and RAF
abstractions of $C$  in order to combine 
their final abstract results. More precisely,  if $D^{\sharp}_{\Int_n}$ and $D^{\sharp}_{\RAF_n}$ are, resp., 
the interval and RAF abstractions of $D$, assume that $P:X\ra \wp(X)$ is a perturbation for $C$ which is soundly approximated 
by $P^{\sharp}_{\Int}: X \ra \Int_n$ on intervals and by $P^{\sharp}_{\RAF}: X \ra \RAF_n$ on RAFs, so that 
$P^\sharp: X \ra \Int_n \times \RAF_n$ is defined by $P^\sharp(x) \ud \tuple{P^{\sharp}_{\Int}(\vec{x}),P^{\sharp}_{\RAF}(\vec{x})}$. 
Then, for each input vector $\vec{x}\in X$, 
our combined verifier first will run both 
$D^{\sharp}_{\Int_n}(P^{\sharp}_{\Int}(\vec{x}))$ and $D^{\sharp}_{\RAF_n}(P^{\sharp}_{\RAF}(\vec{x}))$.
Next, the output 
$D^{\sharp}_{\RAF_n}(P^{\sharp}_{\RAF}(\vec{x})) = \hat{a}\in \RAF_n$ is 
abstracted to the interval $[c_{\hat{a}} - \rad(\hat{a}), c_{\hat{a}} + \rad(\hat{a})]$ which is then
intersected with the interval
$D^{\sharp}_{\Int_n}(P^{\sharp}_{\Int}(\vec{x})) = [l,u]$. Summing up, our combined abstract binary classifier $C^\sharp: \Int_n \times \RAF_n \ra \{\minust,\plust,\qmt\}$ is defined 
as follows: 
\begin{align*}
C^\sharp(P^\sharp(\vec{x})) \ud 
\begin{cases}
\plust & \text{if } \max(l,c_{\hat{a}} - \rad(\hat{a})) \geq 0\\
\minust & \text{if } \min(u,c_{\hat{a}} + \rad(\hat{a})) < 0\\
\qmt & \text{otherwise}
\end{cases}
\end{align*}
As shown in Section~\ref{sec-numdom}, it turns out that 
all the linear and nonlinear abstract operations for polynomial and RBF kernels are sound, so that by Lemma~\ref{srv-lemma}, 
this combined abstract classifier $C^\sharp$ induces a sound robustness verifer for $C$. 
Finally, for multi-classification,  
in both linear
and nonlinear cases, we will use the sound abstract ovo multi-classifier defined in Lemma~\ref{lemma-ovo}.

\section{Experimental Results}
\label{sec:experimental-evaluation}

We implemented our robustness verification method for SVM classifiers in
a tool called \emph{SAVer} (\emph{S}vm \emph{A}bstract \emph{Ver}ifier), which has been written in C (approximately 2.5k LOC) and is available open source in 
GitHub \cite{saver}
together with all the datasets, trained SVMs and results. 
We benchmark the percentage of samples of the full 
test sets for which a SVM classifier
is proved to be robust (and, dually, vulnerable) for a given perturbation, 
as well as the average verification times per sample. We also evaluated 
the impact of using subsets of the training set on the robustness of the corresponding classifiers and 
on verification times. We compared SAVer to DeepPoly \cite{singh2019}, a 
robustness verification tool for 
convolutional deep neural networks based on abstract interpretation.
Our experimental results indicate that SAVer is fast and scalable and that 
the percentage of robustness provable by SAVer for SVMs
is significantly higher than the robustness  provable by DeepPoly for deep neural networks. 
Our experiments for proving the robustness 
were run on a Intel Xeon E5520 2.27GHz CPU, while
the time measures used a AMD Ryzen 7 1700X 3.0GHz CPU.

\subsection{Datasets and Classifiers} 
For our experimental evaluation of SAVer we used the widespread and standard MNIST \cite{mnist} image dataset together and
the recent alternative Fashion-MNIST (F-MNIST) dataset \cite{fmnist}. They both contain grayscale images of $28 \!\times\! 28 \!=\! 784$ pixels
which are represented as normalized vectors of floating-point numbers in $[0,1]^{784}$. MNIST contains images of
handwritten digits, while F-MNIST comprises professional images of fashion dress products from 10 categories taken from the popular Zalando's e-com\-merce website. 
F-MNIST has been recently put forward as a more challenging 
alternative for the original MNIST dataset for benchmarking machine learning algorithms, since the extensive experimental results
reported in \cite{fmnist} showed that the test accuracy of
most machine learning classifiers significantly decreases (a rough average is about 10\%) from MNIST to F-MNIST.
In particular, \cite{fmnist} reports that the average test accuracy (on the whole test set) of
linear, polynomial and RBF SVMs on  MNIST is 95.4\% while for F-MNIST drops to 87.4\%, where RBF SVMs are reportedly the most precise classifiers on F-MNIST with an accuracy of  89.7\%. 
Both datasets include
a training set of 60000 images and a test set of 10000 images, with
no overlap. Our tests are run on the whole test set, where, following \cite{singh2019}, 
these 10000 images of MNIST and F-MNIST have 
been filtered out of those misclassified by the SVMs (ranging from ~3\% of RBF and
polynomial kernels to ~7\% for linear kernel), 
 while
the experiments comparing SAVer with DeepPoly are conducted on the same test subset.
We trained a number of SVM classifiers using different subsets of the training sets and different kernel functions.
We trained our SVMs with linear, RBF and (2, 3 and 9 degrees) polynomial kernels, and in order to benchmark the scalability 
of the verifiers we used the first 1k, 2k, 4k, 8k, 16k, 30k, 60k samples of the training set (training times never exceeded 3 hours). 
For training we used Scikit-learn \cite{scikit}, a popular 
machine learning library for Python, which relies on
the standard Libsvm C library \cite{libsvm} 
implementing the ovo approach for multi-classification. 

\subsection{Results} 
The results of our experimental evaluation are summarized by the charts~(a)-(h) and 
tables (a)-(c) below.  Charts~(a)-(f) refer to the dataset MNIST, while
charts~(g)-(h) compare MNIST and F-MNIST. 
Chart~(a) compares the provable robustness to a $P^\infty_\delta$ adversarial region  
of SVMs which have been 
trained with different kernels. 
It turns out that the 
RBF classifier is the most provably robust and is therefore taken as reference classifier for the successive charts. 
Chart~(b) compares the relative precision of robustness verification which can be obtained by changing the 
abstraction of the RBF classifier. As expected, the relational information of the RAF abstraction makes it 
significantly more precise than interval abstraction, although in some cases intervals can help in refining RAF analysis,
and this justifies their combined use.  

\medskip
\noindent
\hspace*{-40pt}
\begin{tabular}{cccc}
{\footnotesize (a)}
&
\includegraphics[align=c,scale=0.12]{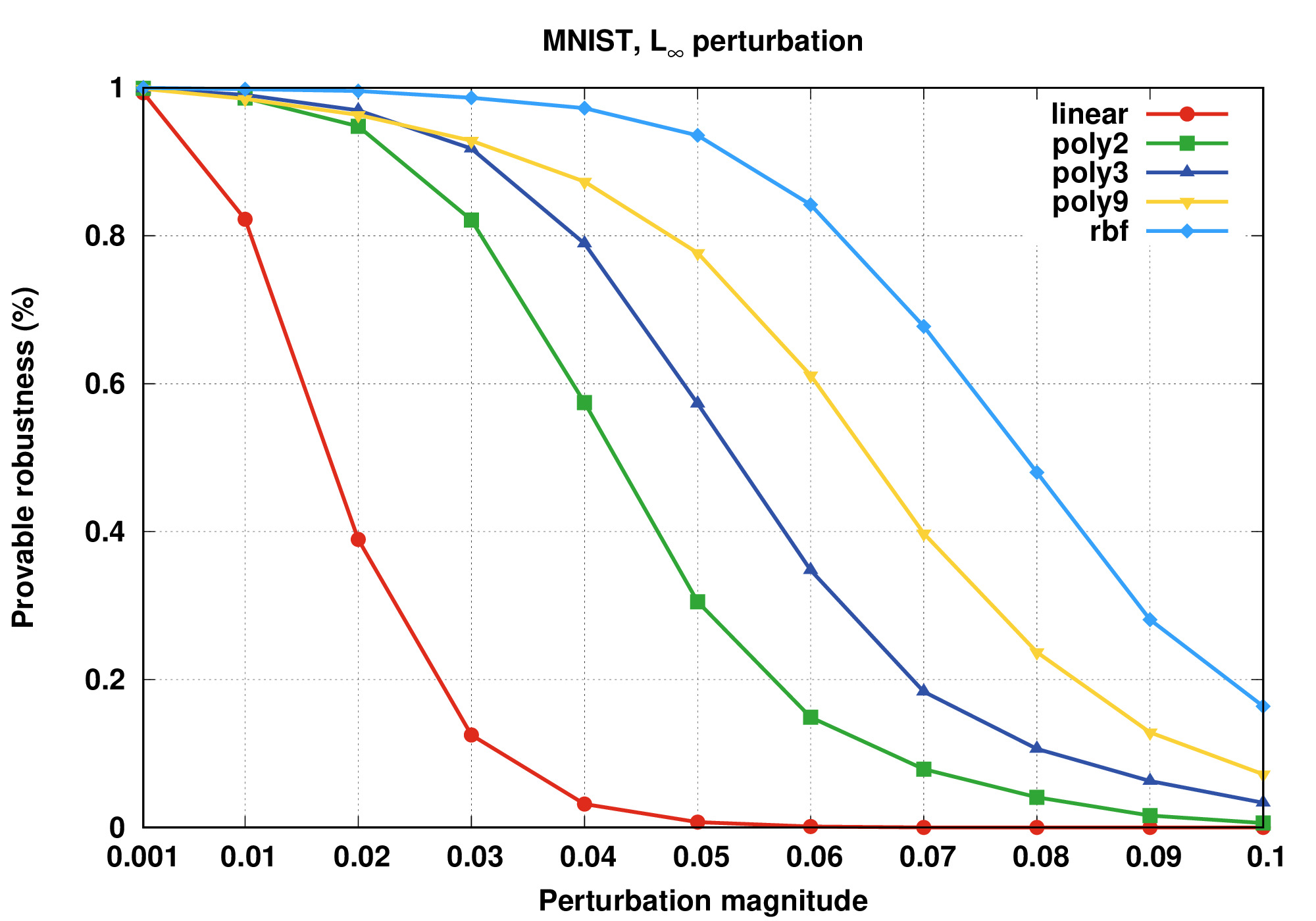}
&
{\footnotesize (b)}
&
\includegraphics[align=c,scale=0.12]{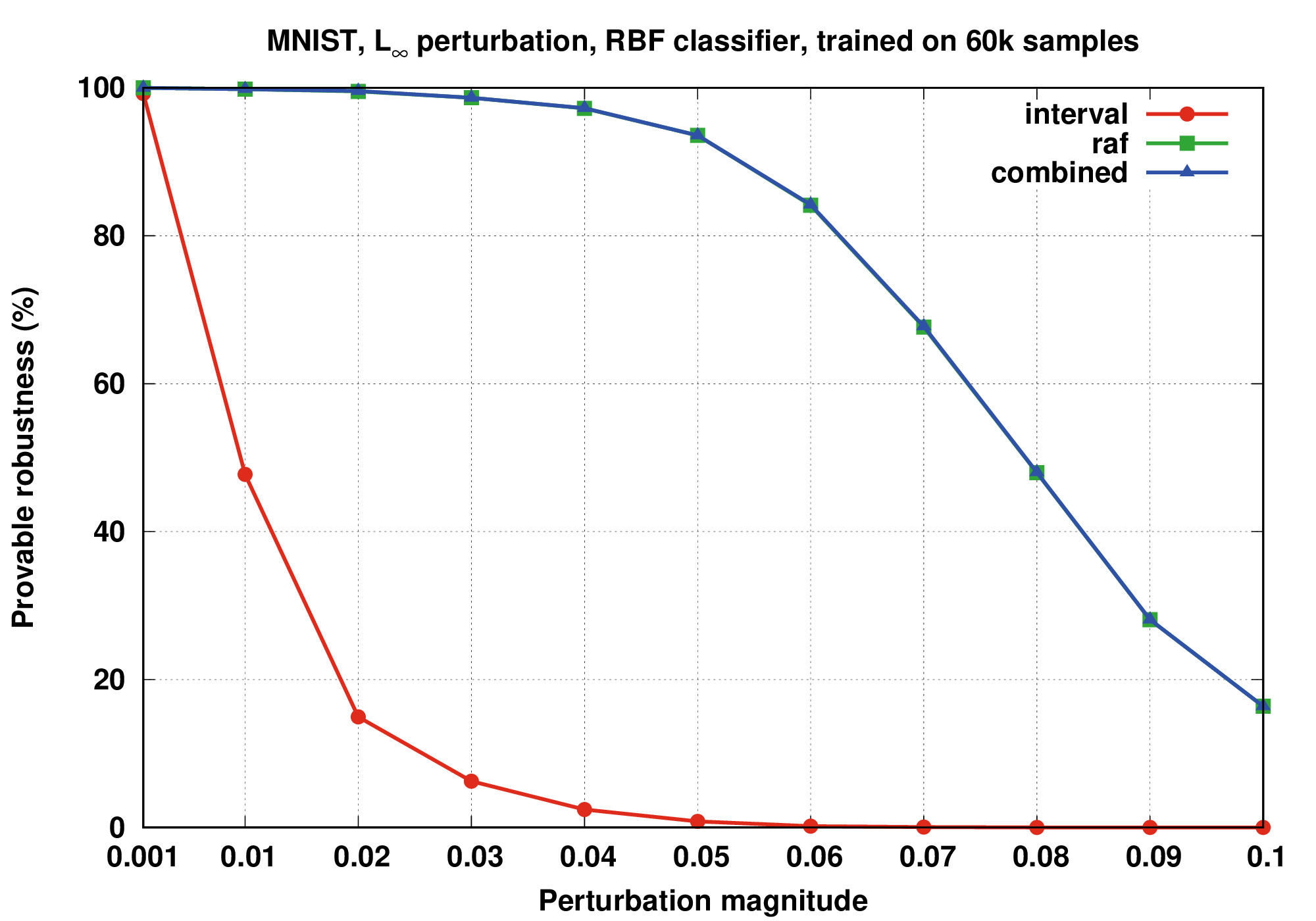}
\end{tabular}

\medskip
Chart~(c) shows how the provable robustness depends on the 
size of the training subset. We may observe here that using more samples for training a SVM classifier tends to overfit the model, making it more sensitive to perturbations, i.e.\ more vulnerable. Chart~(d) shows what we call \emph{provable 
vulnerability} of a classifier $C$: we first consider all the samples $(\vec{x},y)$ in the test set which are misclassified by $C$, i.e., $C(\vec{x})=y'\neq y$ holds,  
then our robustness verifier is run on the perturbations $P^\infty_\delta(\vec{x})$ of these samples for checking 
whether the region $P^\infty_\delta(\vec{x})$ can be proved to be consistently misclassified by $C$ to $y'$. 
Provable vulnerability is significantly lower than provable robustness, meaning that when the classifier is wrong on an input vector, 
it is more likely to assign different labels to similar inputs, rather than assigning the same (wrong) class.

\medskip
\noindent
\hspace*{-40pt}
\begin{tabular}{cccc}
{\footnotesize (c)}
&
\includegraphics[align=c,scale=0.120]{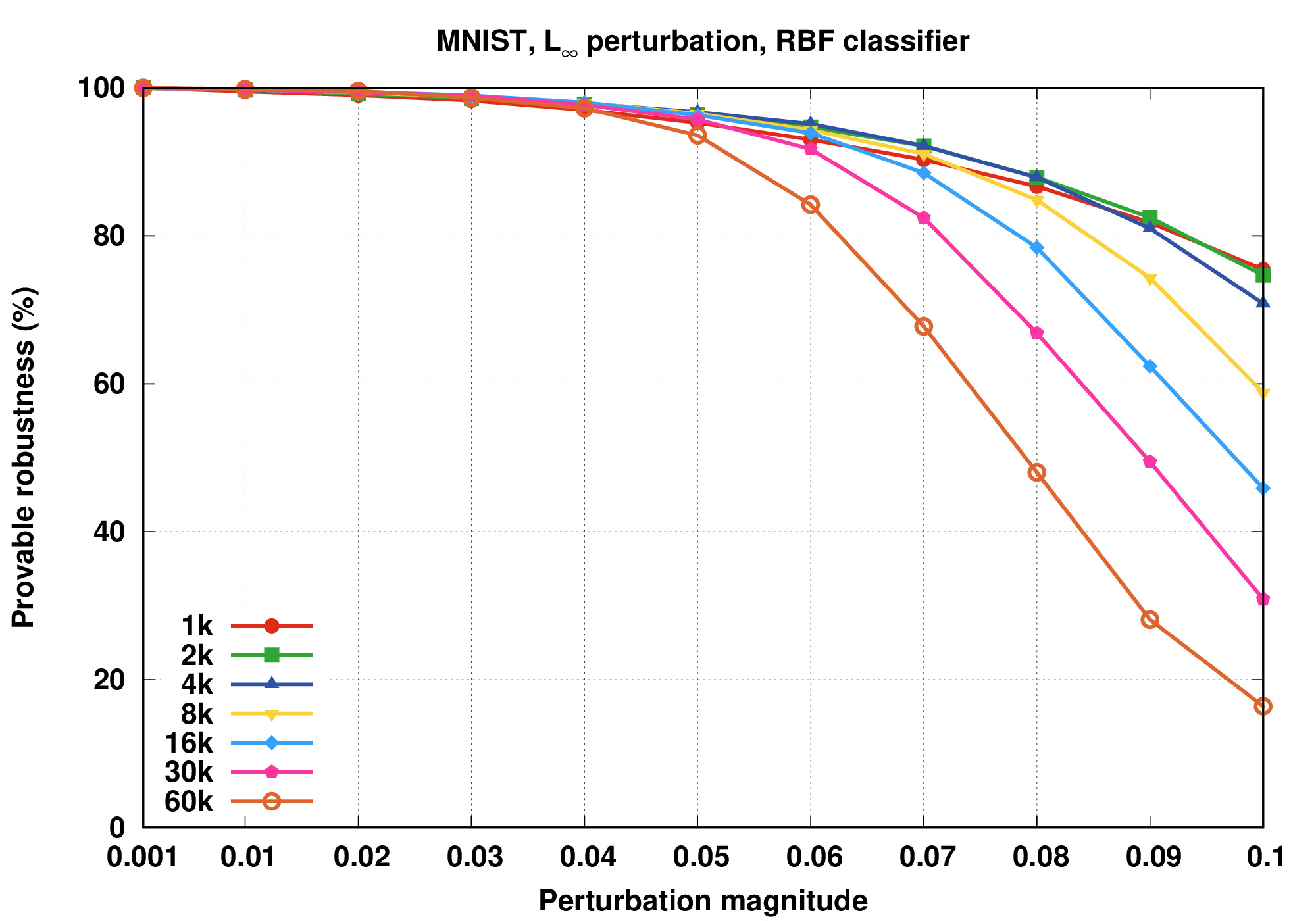}
&
{\footnotesize (d)}
&
\includegraphics[align=c,scale=0.120]{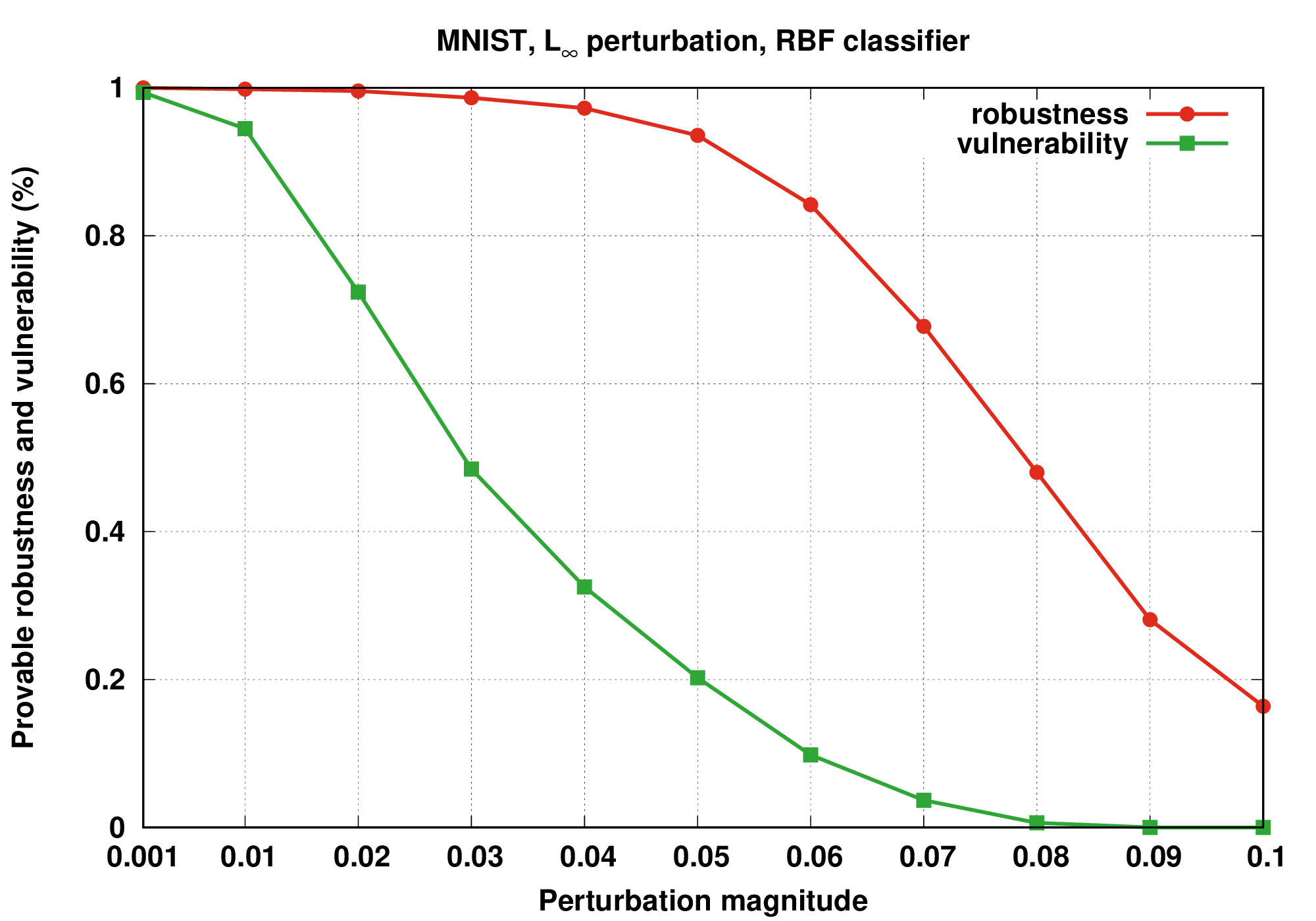}
\end{tabular}

\medskip
Charts~(e) and (f) show the average verification time per image, in milliseconds,
with respect to the size of the classifier, given by the number of
support vectors, and compared for different abstractions.  
Let $N$ and $n$ denote, resp., the number of support vectors and 
the size of input vectors. 
The interval-based abstract $d$-polynomial classifier is in $O(dN)$ time, while the RBF classifier is in $O(N)$, 
because the interval multiplication is constant-time. Hence, 
interval analysis is very fast, just a few milliseconds per image. On the other hand,  
the RAF-based abstract $d$-polynomial and RBF classifiers are, resp., in $O(d Nn\log n)$ and 
$O(N n \log n)$, since RAF multiplication is in $O(n\log n)$, so that RAF-based verification is slower although
it never takes more than 0.5 seconds. 
Table~(a) summarizes the precise 
percentages of provable robustness and average running times of SAVer for the RBF classifier.

\medskip
\noindent
\hspace*{-40pt}
\begin{tabular}{cccc}
{\footnotesize (e)}
&
\includegraphics[align=c,scale=0.120]{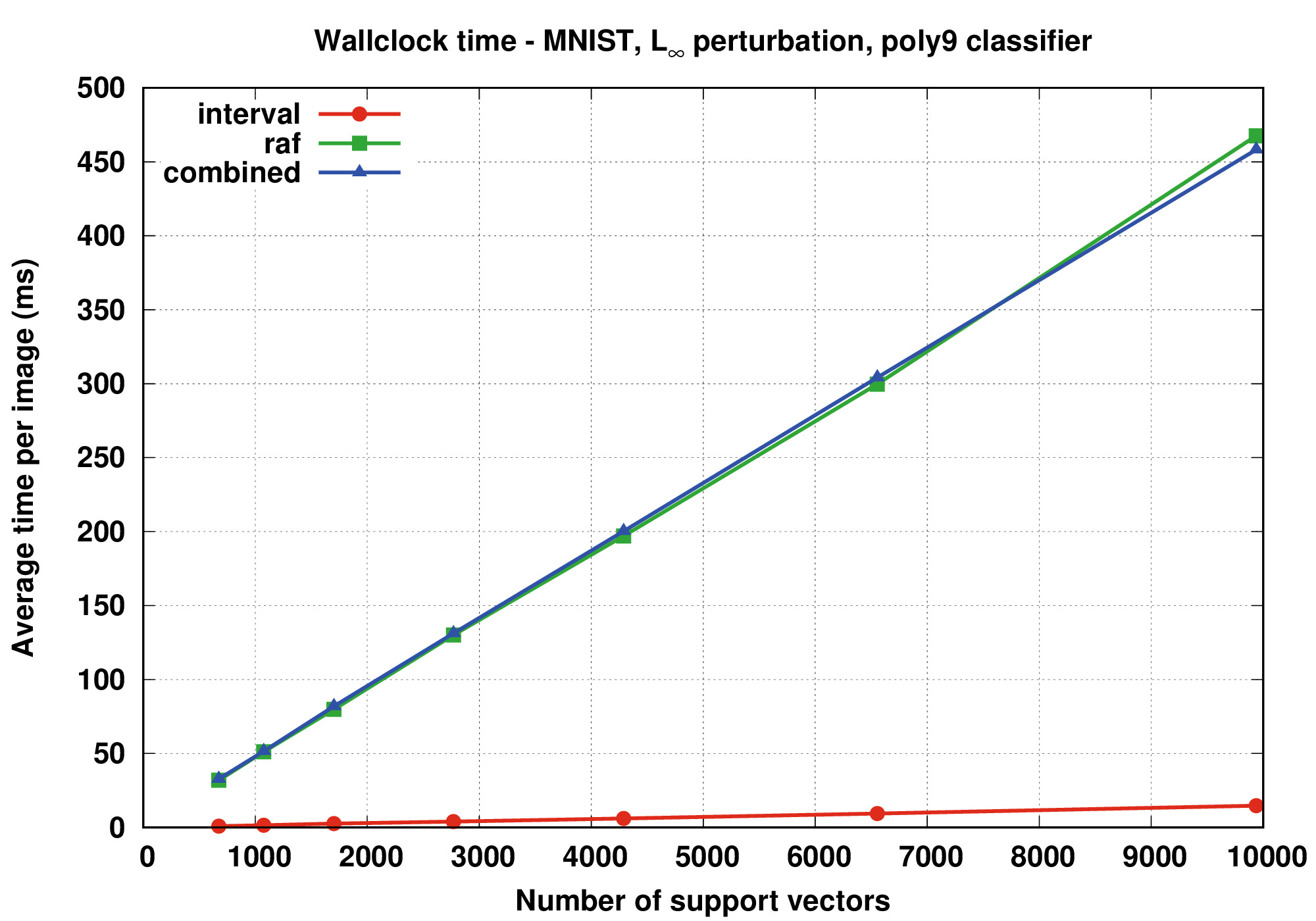}
& 
{\footnotesize (f)}
&
\includegraphics[align=c,scale=0.120]{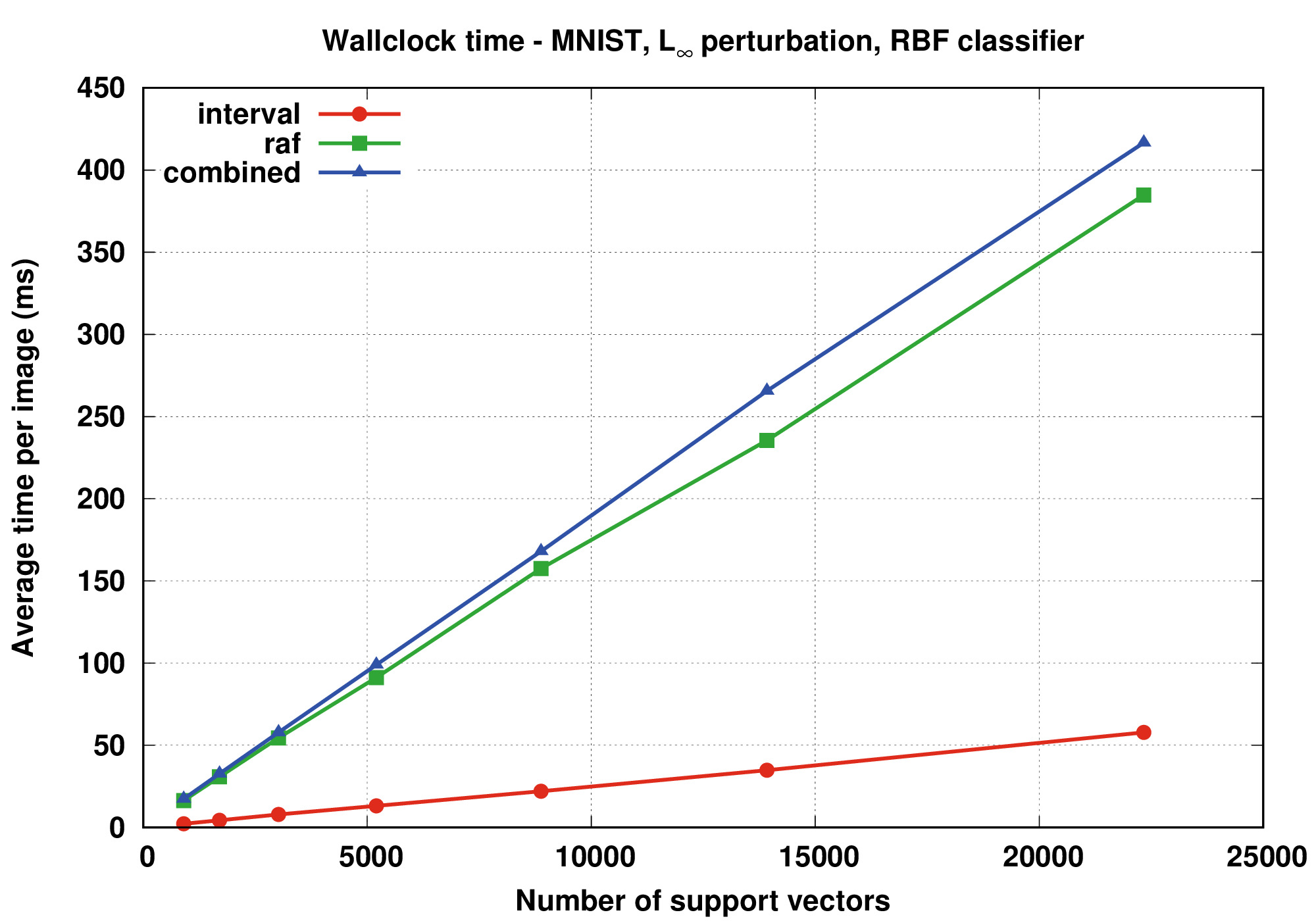}
\end{tabular}

\begin{center}
{\footnotesize (a)\;\;}%
{\footnotesize
 \begin{tabular}{ | c || c|c|c|c|c|c|c|c|c|c|c |}
  \hline
  {$P^\infty_\delta$} & 0.001 & 0.01 & 0.02 & 0.03 & 0.04 & 0.05 & 0.06 & 0.07 & 0.08 & 0.09 & 0.10\\
  \hline
  \bf MNIST: Provable & \multirow{2}{*} {100.00\%} & \multirow{2}{*}{99.83\%} & \multirow{2}{*}{99.57\%} & \multirow{2}{*}{99.19\%}  &  \multirow{2}{*}{97.27\%} & \multirow{2}{*}{93.58\%} & \multirow{2}{*}{82.21\%} &  \multirow{2}{*}{67.76\%} & \multirow{2}{*}{48.02\%} &  \multirow{2}{*}{28.10\%} & \multirow{2}{*}{16.38\%}\\
  \bf robustness for RBF &&&&&&&&&&&\\
  \hline
    \bf \;Time per image (ms)\; &
 416.49 & 417.18 & 415.95 & 417.19 & 416.98 & 417.69 & 417.21 & 416.93 & 417.21 & 417.15 
 & 417.97 \\
  \hline
 \end{tabular} 
}
\end{center}

\smallskip
The same experiments have been replicated on the F-MNIST dataset, and the Charts~(g) and (h) 
show a comparison of the results between MNIST and F-MNIST. 
As expected, robustness is harder to prove (and probably to achieve) 
on F-MNIST than on MNIST, while SAVer proved that F-MNIST is less vulnerable than MNIST. 
Moreover, Table~(b) shows the percentage of provable robustness for MNIST and F-MNIST for 
the frame adversarial region defined in Section~\ref{sec-vsvmc}, for some widths of the frame. 
F-MNIST is significantly harder to prove robust under this attack than MNIST: this is due 
to the fact that the borders of MNIST images do not contain as much information as their centers so that 
classifiers can tolerate some loss of information in the border. By contrast, 
F-MNIST images often carry information on their borders, making them more vulnerable to adversarial framing. 

\medskip
\noindent
\hspace*{-40pt}
\begin{tabular}{cccc}
{\footnotesize (g)}
&
\includegraphics[align=c,scale=0.120]{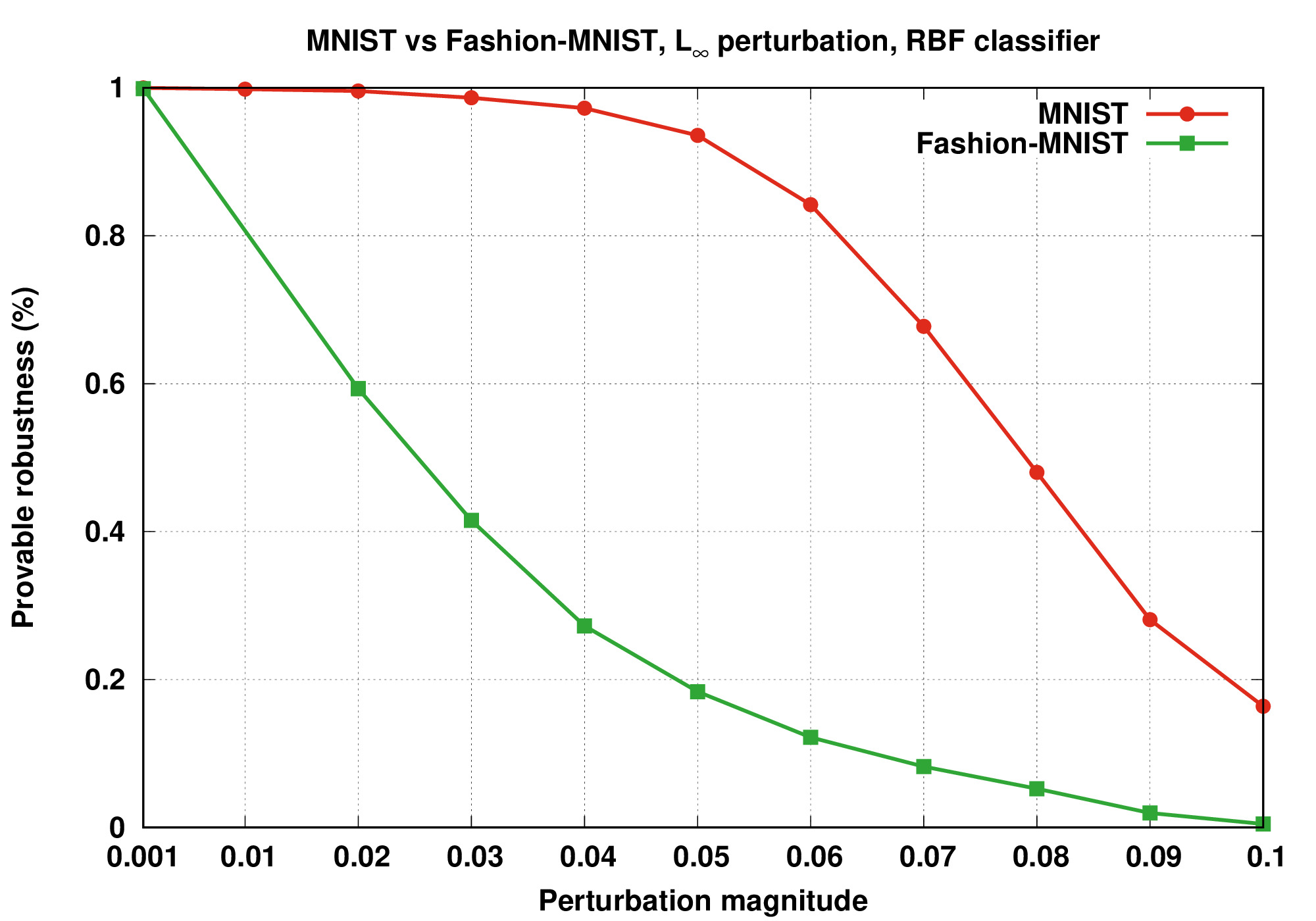}
& 
{\footnotesize (h)}
&
\includegraphics[align=c,scale=0.120]{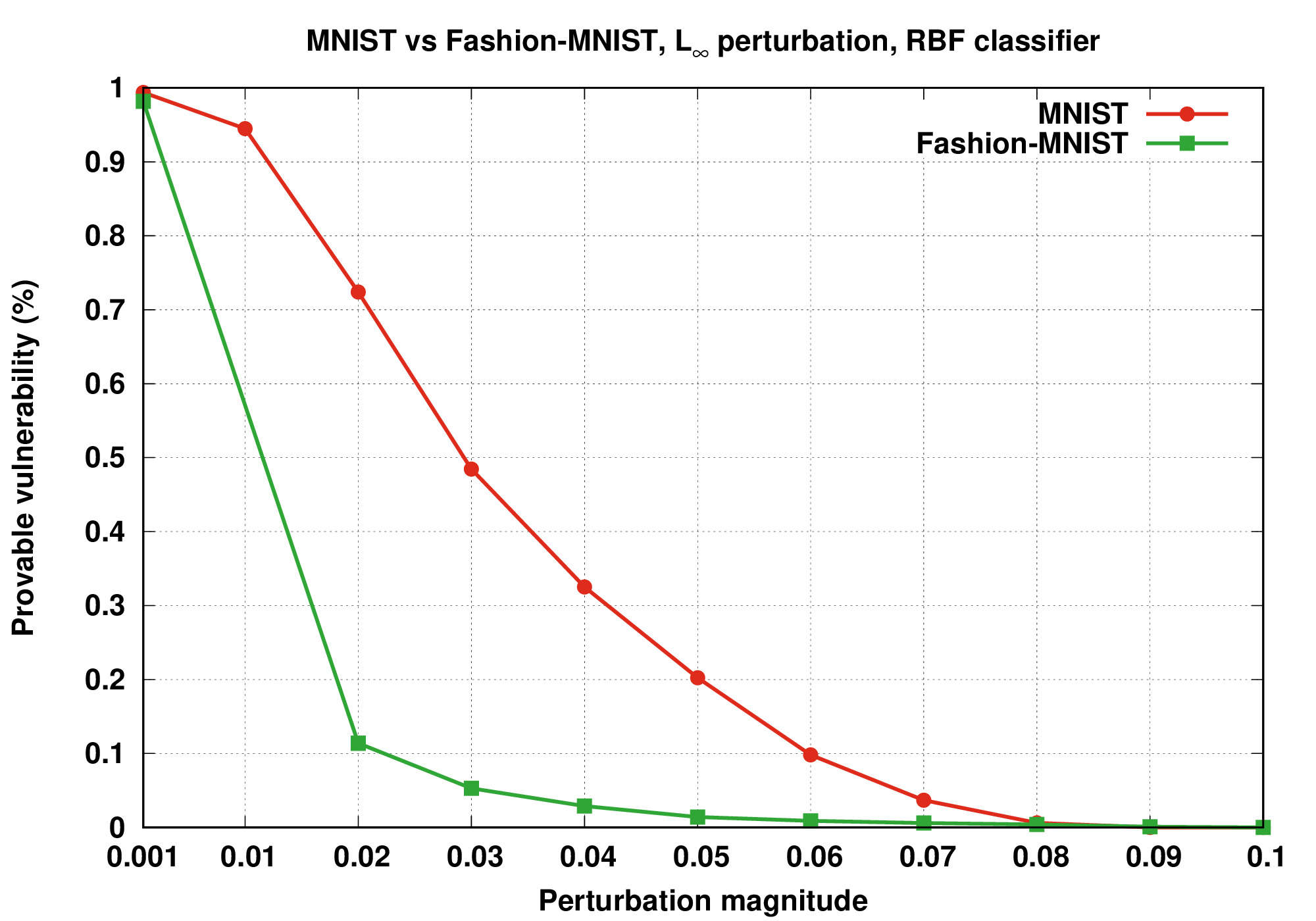}
\end{tabular}

\begin{center}
{\footnotesize (b)\;\;}%
{\footnotesize
 \begin{tabular}{ | c | c | c | }
  \hline
   \multirow{2}{*}{$P^{\frm}_t$} & {\bf MNIST: Provable} & {\bf F-MNIST: Provable} \\
  & {\bf Robustness} & {\bf Robustness} \\
  \hline
  1 & 100.00\% &  49.56\% \\
  2 &  99.64\% &   4.71\% \\
  3 &  87.34\% &   0.00\%  \\ 
  4 &  40.35\% &   0.00\% \\
  \hline 
 \end{tabular}
 }%
\end{center}

\medskip
Finally, Table~(c) compares SAVer with DeepPoly, a robustness verifier for feedforward neural networks \cite{singh2019}. 
This comparison used the same test set of DeepPoly, consisting of the first 100 images of the 
MNIST test set, and the same  perturbations $P^\infty_\delta$. Among
the benchmarks in \cite[Section~6]{singh2019}, we selected the FFNNSmall and FFNNSigmoid deep neural networks, denoted, resp.,
by DeepPoly Small and Sigmoid.  
FFNNSmall has been trained using a standard technique and achieved the best accuracies in \cite{singh2019}, while
FFNNSigmoid was trained using PGD-based adversarial training, a technique explicitly developed to make a classifier more robust.  

\smallskip
\begin{center}
 {\footnotesize (c)\;\;}%
 {\footnotesize
 \begin{tabular}{ | c | c | c | c | c | }
  \hline
  \multirow{2}{*}{$P^\infty_\delta$} & \bf SAVer  & \bf SAVer & \bf DeepPoly & \bf DeepPoly \\
   & \bf poly9 & \bf RBF & \bf Sigmoid & \bf Small \\
  \hline
  ~0.005 &  ~100\% & 100\% & 100\% & 100\% \\
  ~0.010 & ~98.9\% & 100\% &  98\% &  95\% \\
  ~0.015 & ~98.9\% & 100\% &  97\% &  75\% \\
  ~0.020 & ~97.8\% & 100\% &  95\% &  50\% \\
  ~0.025 & ~97.8\% & 100\% &  92\% &  25\% \\
  ~0.030 & ~96.7\% & 100\% &  80\% &  10\% \\
  \hline
 \end{tabular}
}
\end{center}

It turns out that the percentages of robustness provable by SAVer are significantly higher 
than those provable by DeepPoly (precise percentages are not provided in \cite{singh2019}, we extrapolated them from the charts). In particular, both 9-polynomial and RBF SVMs can be proved more robust that
FFNNSigmoid networks, despite the fact that these classifiers are defended by a specific adversarial training.

\section{Future Work}
We believe that this work represents a first step in applying formal
analysis and verification techniques to machine learning based on
support vector machines. We envisage a number of challenging research topics as subject for future work. Generating adversarial examples to machine learning 
methods is important for designing more robust classifiers \cite{advex,biggio15,zhao18} 
and we think that the completeness 
of robustness verification of linear binary classifiers (cf.\ Section~\ref{sec-arvf}) 
could be exploited 
for automatically detecting adversarial examples in linear multiclass SVM classifiers. The main challenge here is to design more precise, ideally complete, 
techniques for abstracting multi-classification based on binary classification. 
Adversarial SVM training is a further stimulating research challenge. Mirman et al.\ 
\cite{mirman2018differentiable} put forward an abstraction-based technique 
for adversarial training of robust neural networks. We think that a similar approach could also work for SVMs, namely
applying 
abstract interpretation to SVM training models rather than to 
SVM classifiers.  

\bibliographystyle{abbrv}
\bibliography{bibliography}

\begin{thebibliography}{10}

\bibitem{dillig2019}
G.~Anderson, S.~Pailoor, I.~Dillig, and S.~Chaudhuri.
\newblock Optimization + abstraction: {A} synergistic approach for analyzing
  neural network robustness.
\newblock In {\em Proc. ACM SIGPLAN Conference on Programming Language Design
  and Implementation (PLDI2019)}. ACM, 2019.

\bibitem{biggio2014}
B.~Biggio, I.~Corona, B.~Nelson, B.~I.~P. Rubinstein, D.~Maiorca, G.~Fumera,
  G.~Giacinto, and F.~Roli.
\newblock Security evaluation of support vector machines in adversarial
  environments.
\newblock In Y.~Ma and G.~Guo, editors, {\em Support Vector Machines
  Applications}, pages 105--153. Springer, 2014.

\bibitem{biggio11}
B.~Biggio, B.~Nelson, and P.~Laskov.
\newblock Support vector machines under adversarial label noise.
\newblock In {\em Proceedings of the 3rd Asian Conference on Machine Learning
  (ACML2011)}, pages 97--112, 2011.

\bibitem{carlini}
N.~Carlini and D.~A. Wagner.
\newblock Towards evaluating the robustness of neural networks.
\newblock In {\em Proc.\ of 2017 {IEEE} Symposium on Security and Privacy
  (SP2017)}, pages 39--57, 2017.

\bibitem{libsvm}
C.-C. Chang and C.-J. Lin.
\newblock Libsvm: A library for support vector machines.
\newblock {\em ACM Trans. Intell. Syst. Technol.}, 2(3):27:1--27:27, May 2011.

\bibitem{CC77}
P.~Cousot and R.~Cousot.
\newblock Abstract interpretation: a unified lattice model for static analysis
  of programs by construction or approximation of fixpoints.
\newblock In {\em Proceedings of the 4th ACM SIGACT-SIGPLAN Symposium on
  Principles of Programming Languages (POPL1977)}, pages 238--252. ACM, 1977.

\bibitem{cs00}
N.~Cristianini and J.~Shawe-Taylor.
\newblock {\em An Introduction to Support Vector Machines and Other
  Kernel-based Learning Methods}.
\newblock Cambridge University Press, 2000.

\bibitem{ehlers17}
R.~Ehlers.
\newblock Formal verification of piece-wise linear feed-forward neural
  networks.
\newblock In {\em Proc.\ 15th Intern.\ Symp,\ on Automated Technology for
  Verification and Analysis (ATVA2017)}, pages 269--286, 2017.

\bibitem{mpfr}
L.~Fousse, G.~Hanrot, V.~Lef\`{e}vre, P.~P{\'e}lissier, and P.~Zimmermann.
\newblock Mpfr: A multiple-precision binary floating-point library with correct
  rounding.
\newblock {\em ACM Trans. Math. Softw.}, 33(2), 2007.

\bibitem{vechev-sp18}
T.~Gehr, M.~Mirman, D.~Drachsler{-}Cohen, P.~Tsankov, S.~Chaudhuri, and M.~T.
  Vechev.
\newblock {AI2:} safety and robustness certification of neural networks with
  abstract interpretation.
\newblock In {\em Proc.\ 2018 {IEEE} Symposium on Security and Privacy
  (SP2018)}, pages 3--18, 2018.

\bibitem{cacm18}
I.~Goodfellow, P.~McDaniel, and N.~Papernot.
\newblock Making machine learning robust against adversarial inputs.
\newblock {\em Commun. ACM}, 61(7):56--66, 2018.

\bibitem{advex}
I.~J. Goodfellow, J.~Shlens, and C.~Szegedy.
\newblock Explaining and harnessing adversarial examples.
\newblock In {\em Proc.\ International Conference on Learning Representations
  (ICLR2015)}, 2015.

\bibitem{deepsafe}
D.~Gopinath, G.~Katz, C.~S. Pasareanu, and C.~Barrett.
\newblock Deep{S}afe: {A} data-driven approach for assessing robustness of
  neural networks.
\newblock In {\em Proceedings of the 16th Int.\ Symp.\ on Automated Technology
  for Verification and Analysis (ATVA2018)}, pages 3--19, 2018.

\bibitem{goubault2015}
E.~Goubault and S.~Putot.
\newblock A zonotopic framework for functional abstractions.
\newblock {\em Formal Methods in System Design}, 47(3):302--360, 2015.

\bibitem{multiclass}
C.-W. Hsu and C.-J. Lin.
\newblock A comparison of methods for multiclass support vector machines.
\newblock {\em IEEE Trans. Neur. Netw.}, 13(2):415--425, 2002.

\bibitem{huang17}
X.~Huang, M.~Kwiatkowska, S.~Wang, and M.~Wu.
\newblock Safety verification of deep neural networks.
\newblock In R.~Majumdar and V.~Kun{\v{c}}ak, editors, {\em Proc.\ Intern.\
  Conf.\ on Computer Aided Verification (CAV2017)}, pages 3--29. Springer,
  2017.

\bibitem{C99}
{ISO/IEC JTC1 SC22 WG14}.
\newblock {ISO}/{IEC} 9899:{TC2} {P}rogramming {L}anguages - {C}.
\newblock Technical report, International Organization for Standardization, May
  2005.

\bibitem{apron}
B.~Jeannet and A.~Min{\'e}.
\newblock Apron: A library of numerical abstract domains for static analysis.
\newblock In A.~Bouajjani and O.~Maler, editors, {\em Proc.\ of the Intern.\
  Conf.\ on Computer Aided Verification (CAV2009)}, pages 661--667. Springer,
  2009.

\bibitem{katz17}
G.~Katz, C.~Barrett, D.~L. Dill, K.~Julian, and M.~J. Kochenderfer.
\newblock Reluplex: An efficient {S}{M}{T} solver for verifying deep neural
  networks.
\newblock In R.~Majumdar and V.~Kun{\v{c}}ak, editors, {\em Proc.\ Intern.\
  Conf.\ on Computer Aided Verification (CAV2017)}, pages 97--117. Springer,
  2017.

\bibitem{aml-scale}
A.~Kurakin, I.~J. Goodfellow, and S.~Bengio.
\newblock Adversarial machine learning at scale.
\newblock In {\em Proceedings of the 5th International Conference on Learning
  Representations (ICLR2017)}, 2017.

\bibitem{mnist}
Y.~LeCun, L.~Bottou, Y.~Bengio, and P.~Haffner.
\newblock Gradient-based learning applied to document recognition.
\newblock {\em Proceedings of the IEEE}, 86(11):2278--2324, 1998.

\bibitem{messine}
F.~Messine.
\newblock Extentions of affine arithmetic: Application to unconstrained global
  optimization.
\newblock {\em J. Universal Computer Science}, 8(11):992--1015, 2002.

\bibitem{mine04}
A.~Min{\'e}.
\newblock Relational abstract domains for the detection of floating-point
  run-time errors.
\newblock In D.~Schmidt, editor, {\em Proc.\ European Symposium on Programming
  (ESOP2004)}, pages 3--17. Springer, 2004.

\bibitem{min17}
A.~Min{\'{e}}.
\newblock Tutorial on static inference of numeric invariants by abstract
  interpretation.
\newblock {\em Foundations and Trends in Programming Languages},
  4(3-4):120--372, 2017.

\bibitem{mirman2018differentiable}
M.~Mirman, T.~Gehr, and M.~Vechev.
\newblock Differentiable abstract interpretation for provably robust neural
  networks.
\newblock In {\em Proc.\ of the International Conference on Machine Learning
  (ICML2018)}, pages 3575--3583, 2018.

\bibitem{Nam2010}
G.~P. Nam, B.~J. Kang, and K.~R. Park.
\newblock Robustness of face recognition to variations of illumination on
  mobile devices based on {SVM}.
\newblock {\em {KSII Transactions on Internet and Information Systems}},
  4(1):25--44, 2010.

\bibitem{scikit}
F.~Pedregosa, G.~Varoquaux, A.~Gramfort, V.~Michel, B.~Thirion, O.~Grisel,
  M.~Blondel, P.~Prettenhofer, R.~Weiss, V.~Dubourg, J.~VanderPlas, A.~Passos,
  D.~Cournapeau, M.~Brucher, M.~Perrot, and E.~Duchesnay.
\newblock Scikit-learn: Machine learning in {P}ython.
\newblock {\em Journal of Machine Learning Research}, 12:2825--2830, 2011.

\bibitem{pt10}
L.~Pulina and A.~Tacchella.
\newblock An abstraction-refinement approach to verification of artificial
  neural networks.
\newblock In T.~Touili, B.~Cook, and P.~Jackson, editors, {\em Proc.\ of the
  Intern.\ Conf.\ on Computer Aided Verification (CAV2010)}, pages 243--257.
  Springer, 2010.

\bibitem{pt12}
L.~Pulina and A.~Tacchella.
\newblock Challenging {SMT} solvers to verify neural networks.
\newblock {\em {AI} Commun.}, 25(2):117--135, 2012.

\bibitem{saver}
F.~Ranzato and M.~Zanella.
\newblock {S}{A}{V}er {G}it{H}ub {R}epository.
\newblock \url{https://github.com/svm-abstract-verifier}, 2019.

\bibitem{vechev-nips18}
G.~Singh, T.~Gehr, M.~Mirman, M.~P{\"{u}}schel, and M.~T. Vechev.
\newblock Fast and effective robustness certification.
\newblock In {\em Advances in Neural Information Processing Systems 31: Proc.\
  Annual Conference on Neural Information Processing Systems 2018,
  (NeurIPS2018)}, pages 10825--10836, 2018.

\bibitem{singh2019}
G.~Singh, T.~Gehr, M.~P\"{u}schel, and M.~Vechev.
\newblock An abstract domain for certifying neural networks.
\newblock {\em Proc. ACM Program. Lang.}, 3(POPL2019):41:1--41:30, Jan. 2019.

\bibitem{skalna2017}
I.~Skalna and M.~Hlad{\'i}k.
\newblock A new algorithm for {C}hebyshev minimum-error multiplication of
  reduced affine forms.
\newblock {\em Numerical Algorithms}, 76(4):1131--1152, Dec 2017.

\bibitem{ieee754}
I.~C. Society.
\newblock {\em {IEEE standard for binary floating-point arithmetic}}.
\newblock Institute of Electrical and Electronics Engineers, New York, 1985.
\newblock Note: Standard 754--1985.

\bibitem{sf97}
J.~Stolfi and L.~H. de~Figueiredo.
\newblock {\em Self-Validated Numerical Methods and Applications}.
\newblock Brazilian Mathematics Colloquium Monograph, IMPA, Rio de Janeiro,
  Brazil, 1997.

\bibitem{sf2004}
J.~Stolfi and L.~H. de~Figueiredo.
\newblock Affine arithmetic: Concepts and applications.
\newblock {\em Numerical Algorithms}, 37(1):147--158, Dec 2004.

\bibitem{trafalis2007robust}
T.~B. Trafalis and R.~C. Gilbert.
\newblock Robust support vector machines for classification and computational
  issues.
\newblock {\em Optimisation Methods and Software}, 22(1):187--198, 2007.

\bibitem{aml-book}
Y.~Vorobeychik and M.~Kantarcioglu.
\newblock Adversarial machine learning.
\newblock In {\em Synthesis Lectures on Artificial Intelligence and Machine
  Learning}, volume 12(3), pages 1--169. Morgan \& Claypool Publishers, August
  2018.

\bibitem{wang18}
S.~Wang, K.~Pei, J.~Whitehouse, J.~Yang, and S.~Jana.
\newblock Formal security analysis of neural networks using symbolic intervals.
\newblock In {\em Proceedings of the 27th USENIX Conference on Security
  Symposium}, (SEC2018), pages 1599--1614. USENIX Association, 2018.

\bibitem{weng18}
T.~Weng, H.~Zhang, H.~Chen, Z.~Song, C.~Hsieh, L.~Daniel, D.~S. Boning, and
  I.~S. Dhillon.
\newblock Towards fast computation of certified robustness for {R}e{L}{U}
  networks.
\newblock In {\em Proceedings of the 35th International Conference on Machine
  Learning, (ICML2018)}, pages 5273--5282, 2018.

\bibitem{biggio15}
H.~Xiao, B.~Biggio, B.~Nelson, H.~Xiao, C.~Eckert, and F.~Roli.
\newblock Support vector machines under adversarial label contamination.
\newblock {\em Neurocomputing}, 160:53--62, 2015.

\bibitem{fmnist}
H.~Xiao, K.~Rasul, and R.~Vollgraf.
\newblock Fashion-{M}{N}{I}{S}{T}: {A} novel image dataset for benchmarking
  machine learning algorithms.
\newblock {\em CoRR}, abs/1708.07747, 2017.

\bibitem{xu2009robustness}
H.~Xu, C.~Caramanis, and S.~Mannor.
\newblock Robustness and regularization of support vector machines.
\newblock {\em Journal of Machine Learning Research}, 10:1485--1510, 2009.

\bibitem{zajac}
M.~Zajac, K.~Zolna, N.~Rostamzadeh, and P.~O. Pinheiro.
\newblock Adversarial framing for image and video classification.
\newblock In {\em Proceedings of the 33rd AAAI Conference on Artificial
  Intelligence (AAAI2019)}, 2019.

\bibitem{zhao18}
Z.~Zhao, D.~Dua, and S.~Singh.
\newblock Generating natural adversarial examples.
\newblock In {\em Proc.\ 6th International Conference on Learning
  Representations (ICLR2018)}, 2018.

\bibitem{Zhou2012}
Y.~Zhou, M.~Kantarcioglu, B.~Thuraisingham, and B.~Xi.
\newblock Adversarial support vector machine learning.
\newblock In {\em Proceedings of the 18th ACM SIGKDD International Conference
  on Knowledge Discovery and Data Mining (KDD2012)}, pages 1059--1067. ACM,
  2012.

\end{thebibliography}

\end{document}